
\documentclass[10pt,twocolumn,letterpaper]{article}

\usepackage{iccv}              
%
%


%
\definecolor{iccvblue}{rgb}{0.21,0.49,0.74}
\usepackage[pagebackref,breaklinks,colorlinks,allcolors=iccvblue]{hyperref}

\newtheorem{theorem}{Theorem}[section]

\newtheorem{corollary}{Corollary}[section]

\newenvironment{proof}{
  \textbf{Proof.} 
}{
  \hfill\ensuremath{\square}
}
\usepackage{xcolor}
\usepackage{multirow}
\usepackage{footnote} 
\title{Skip-Vision: Efficient and Scalable Acceleration of Vision-Language Models via Adaptive Token Skipping}

\author{Weili Zeng\textsuperscript{$^*$}\\
MoE Key Lab of Artificial Intelligence, AI Institute\\
Shanghai Jiao Tong University\\
{\tt\small zwl666@sjtu.edu.cn}
\and
Ziyuan Huang\\
Ant Group\\
{\tt\small ziyuan.huang@u.nus.edu}
\and
Kaixiang Ji\\
Ant Group\\
{\tt\small kaixiang.jkx@antgroup.com}
\and
Yichao Yan\textsuperscript{$^\dag$}\\
MoE Key Lab of Artificial Intelligence, AI Institute\\
Shanghai Jiao Tong University\\
{\tt\small yanyichao@sjtu.edu.cn}
}
\begin{document}
\maketitle
\makeatletter
\def\@thefnmark{} 
\@footnotetext{\textsuperscript{*}This work was done when Weili Zeng interned at Ant Group.}
\@footnotetext{\textsuperscript{\dag}Corresponding author}
\makeatother
\begin{abstract}
Transformer-based models have driven significant advancements in Multimodal Large Language Models (MLLMs), yet their computational costs surge drastically when scaling resolution, training data, and model parameters. 
A key bottleneck stems from the proliferation of visual tokens required for fine-grained image understanding.
We propose Skip-Vision, a unified framework addressing both training and inference inefficiencies in vision-language models. On top of conventional token compression approaches, our method introduces two complementary acceleration strategies. 
For training acceleration, we observe that Feed-Forward Network (FFN) computations on visual tokens induce marginal feature updates. 
This motivates our Skip-FFN strategy, which bypasses FFN layers for redundant visual tokens. 
For inference acceleration, we design a selective KV-cache removal mechanism that prunes the skipped key-value pairs during decoding while preserving model performance.
Experimental results demonstrate that Skip-Vision reduces training time by up to 35\%, inference FLOPs by 75\%, and latency by 45\%, while achieving comparable or superior performance to existing methods. Our work provides a practical solution for scaling high-performance MLLMs with enhanced efficiency.
\end{abstract}

\section{Introduction}
\label{sec:intro}

Transformer-based large-scale models have significantly advanced progress in Artificial Intelligence (AI)~\cite{clip,gpt,touvron2023llama}. Their remarkable capabilities have driven the emergence of modern Multimodal Large Language Models (MLLMs)~\cite{flamingo,llava,gemini,gpt4}, which demonstrate vision-language competencies comparable to human performance. Empirical evidence suggests that scaling laws continue to be effective, primarily across three dimensions: visual scaling~\cite{llava-improved,llava-uhd,cos}, data scaling~\cite{sphinx,lin2024vila}, and model scaling~\cite{qwen2,li2024llavanext-strong}. However, these scaling methods present substantial efficiency issues, significantly increasing the training and inference burden. 
In a typical LLaVA-1.5 framework~\cite{llava}, where the model receives 576 visual tokens and 40 text tokens, inference using Vicuna-13B~\cite{vicuna} LLM backbone demands 200 trillion FLOPS~\cite{efficient}. These motivate our pursuit of an MLLM architecture that optimally balances efficiency and performance.


\begin{figure}[t]
  \centering
   \includegraphics[width=\linewidth]{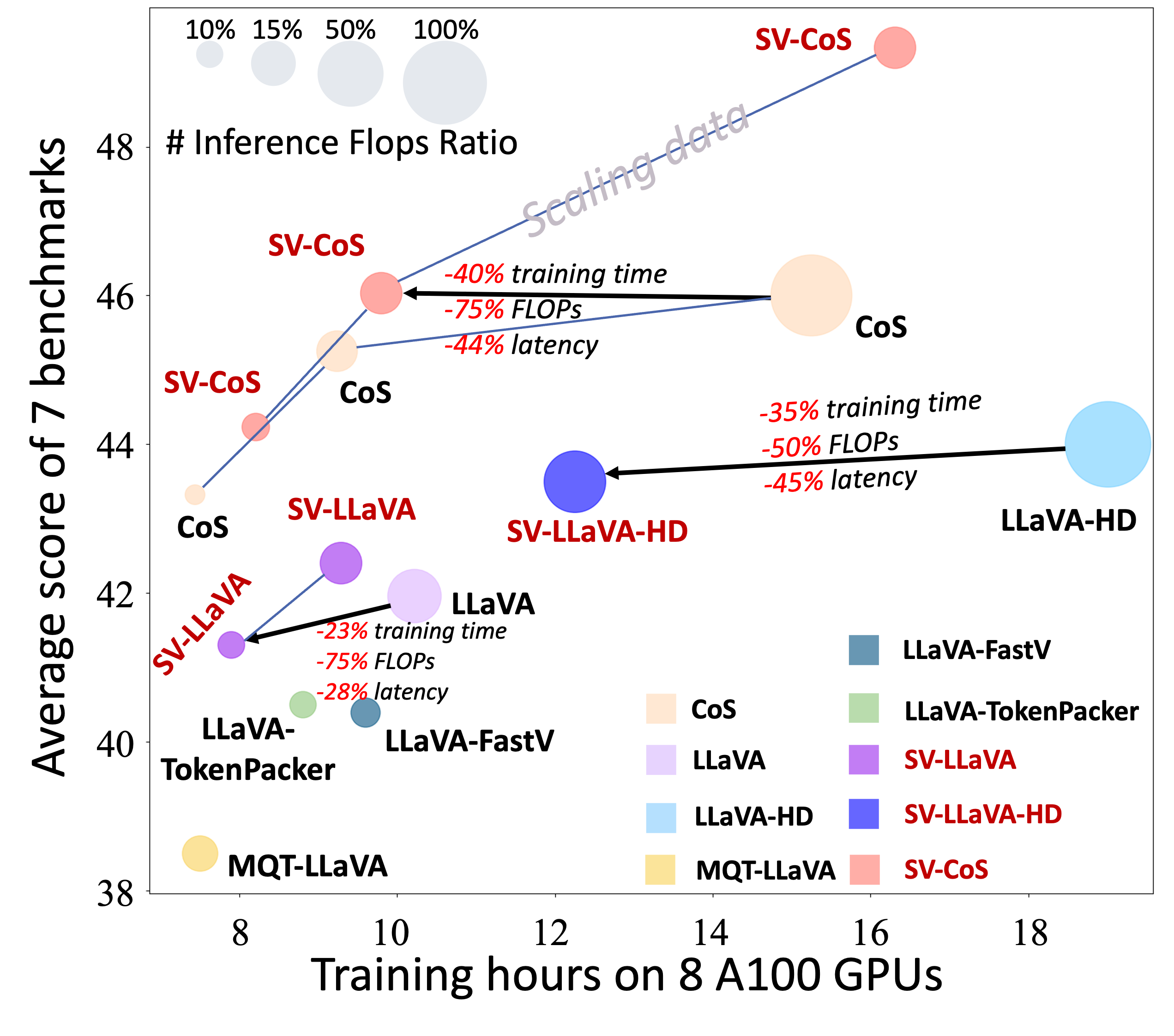}
   \caption{\textbf{Performance-efficiency trade-off curve.} Each circle denotes a model configuration, where our models utilize the Skip-Vision framework, with CoS~\cite{cos}, LLava-HD~\cite{llava-improved} and LLaVA~\cite{llava} serving as baselines. Circle sizes reflect the inference FLOPs ratio. Skip-Vision demonstrate superior performance, scaling effectively with increased FLOPs and data and achieving higher inference efficiency when compared to baselines and other effcient MLLM methods. All methods utilize LLaMA3 8B as the foundational large language model.}
   \label{fig:2}
   \vspace{-3mm}
\end{figure}

Recent advances in Multimodal Large Language Models (MLLMs), such as Chain-of-Sight (CoS)\cite{cos} and LLaVA-Next\cite{llavanext}, have improved performance by increasing visual tokens. However, during the Supervised Fine-Tuning (SFT) stage, the computational cost rises dramatically, leading to diminishing returns and making the scaling unsustainable. As shown in Figure \ref{fig:2}, CoS requires a $165\%$ increase in training resources and a $400\%$ increase in inference computation for a mere $1.7\%$ performance improvement, while LLaVA-Next faces similar inefficiencies.
This challenge emphasizes the need to reduce visual token computation. Approaches like token compression or pruning \cite{tokenpacker,deco,fastv,mqt,voco,auroracap}, while efficient, risk losing critical visual information, revealing a trade-off between efficiency and performance. This work aims to optimize the model's computational efficiency from an architectural perspective, proposing a scalable solution that balances efficiency with visual fidelity.


\begin{table}
  \centering
  \begin{tabular}{c|c|c}
    \hline Llama layer & Complexity & Parameters \\
    \hline FFN & $10.5LNC^2$ & $5.64B (70 \%)$ \\
    Attention & $2LN^2C+1.5NC^2$ & $1.34B (17 \%)$ \\
    Embedding & - & $0.5B (6 \%)$ \\
    \hline
    \end{tabular}
  \caption{\textbf{Computational complexity and parameter statistics for Llama3 8B.} L denotes the number of blocks, N represents the number of input tokens, and C is the feature dimensionality}
  \label{tab:1}
  \vspace{-3mm}
\end{table}

We systematically examine the parameter count and FLOPs distribution across different components of the LLM. As shown in Table \ref{tab:1}, using LLaMA-3~\cite{llama3} as an example, the parameter and FLOPS requirements for the FFN are significantly higher than the attention module. Furthermore, as illustrated in Figure \ref{fig:1}, when the number of visual tokens is large, the majority of computations are concentrated on FFN of these tokens. 
 In particular, we computed the difference in the magnitudes of the features before and after the FFN transformation for different tokens. As illustrated in the figure \ref{fig:norm}, we observe that the updates introduced by the FFN to the visual tokens are significantly smaller than those that affect the text tokens. This is similar to the observation in \cite{Shortv}.
This observation suggests a potential approach: \textbf{could we bypass FFN computations for a substantial portion of visual tokens? }
Furthermore, in the autoregressive training framework of MLLMs, only the final visual token utilizes its last hidden state for next-token prediction, while the other visual tokens primarily serve to convey causal information. During model inference, inspired by~\cite{understanding-mllm}, we observe that most visual information is progressively integrated into the final token through the early causal layers. This raises the possibility of reducing reliance on the full visual scaling KV cache throughout inference, potentially accelerating the inference process.


\begin{figure}[h]
  \centering
   \includegraphics[width=\linewidth]{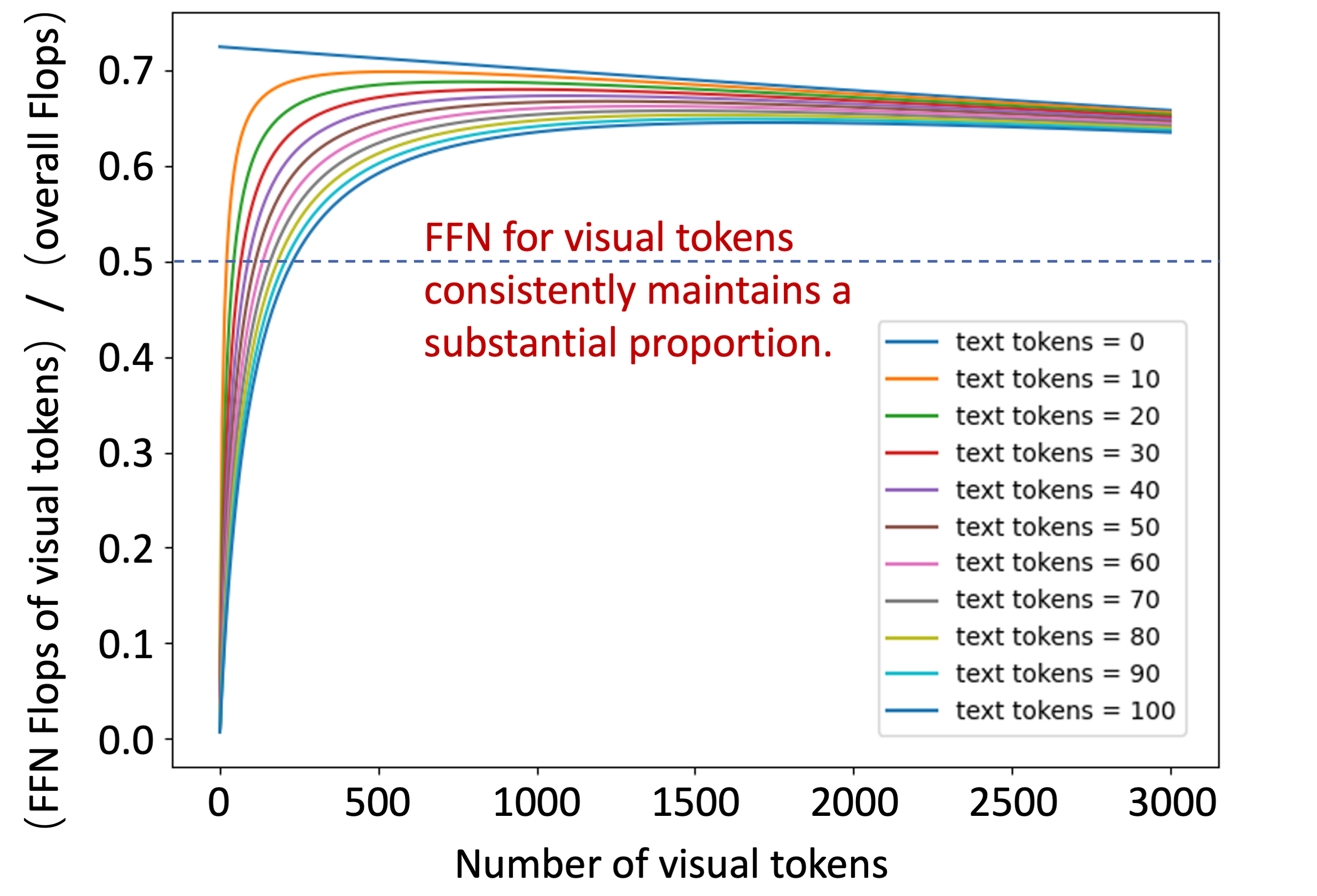}
   \caption{\textbf{Flops ratio of visual tokens in FFN.} When visual tokens greatly outnumber text tokens, computation is predominantly consumed by the FFN of visual tokens. Reducing their passage through the FFN can thus markedly decrease computational costs.}
   \label{fig:1}
   \vspace{-3mm}
\end{figure}

\begin{figure*}[t]
    \centering
    \subfloat[FFN updates for CoS]{%
        \includegraphics[width=0.33\textwidth]{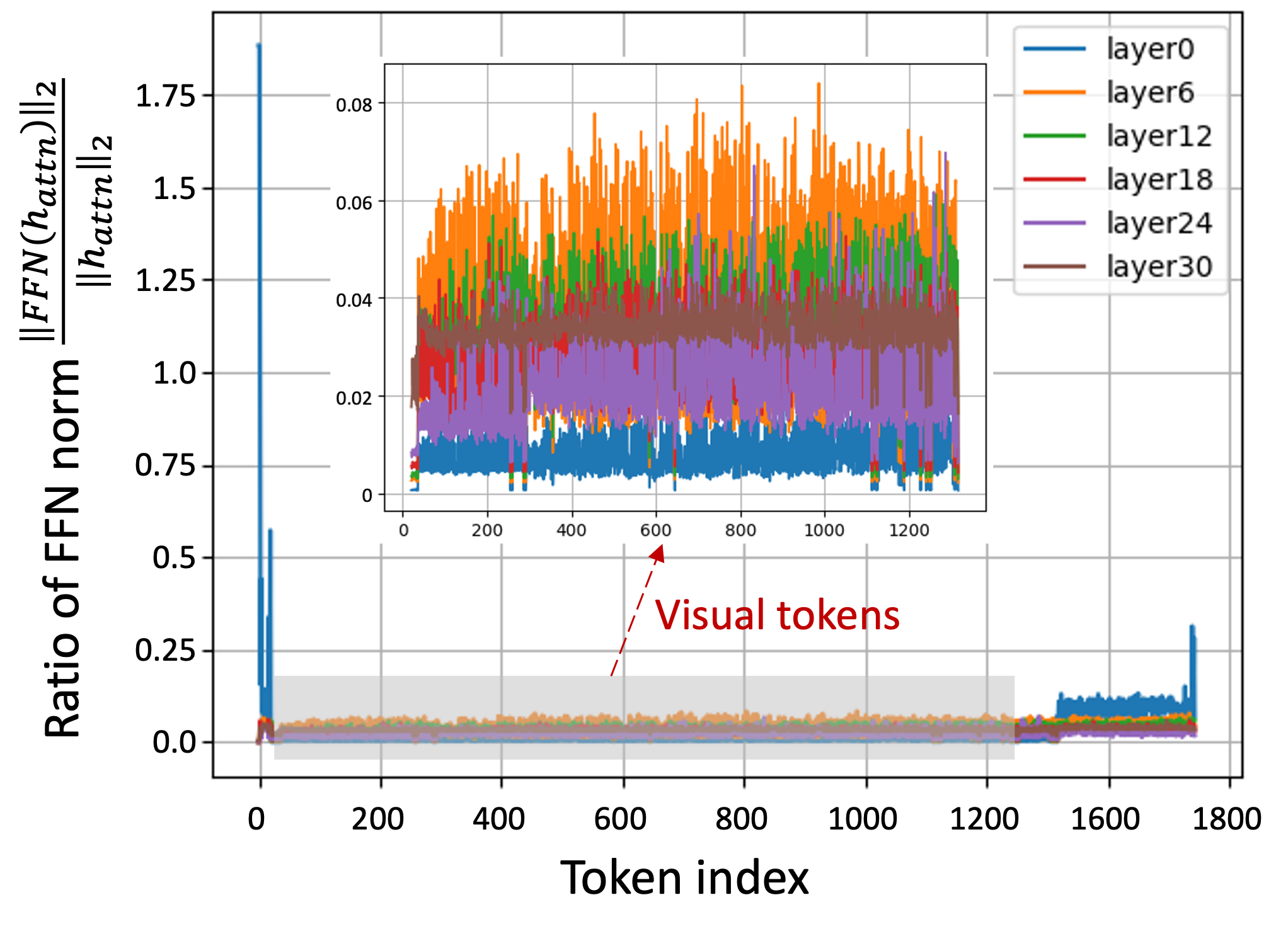}
    }
    \subfloat[FFN updates for LLaVA-HD]{%
        \includegraphics[width=0.33\textwidth]{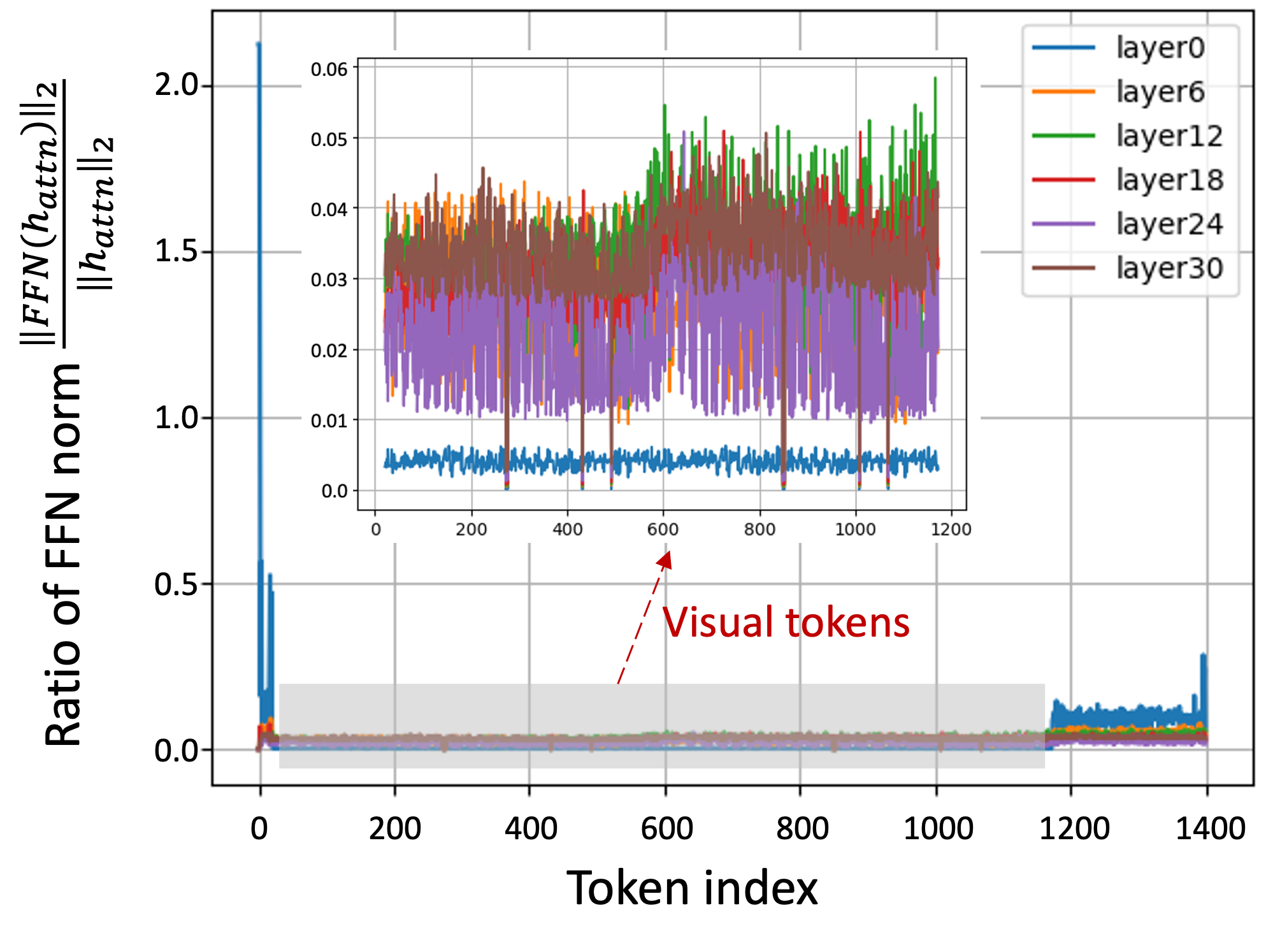}
    }
    \subfloat[FFN updates for LLaVA]{%
        \includegraphics[width=0.33\textwidth]{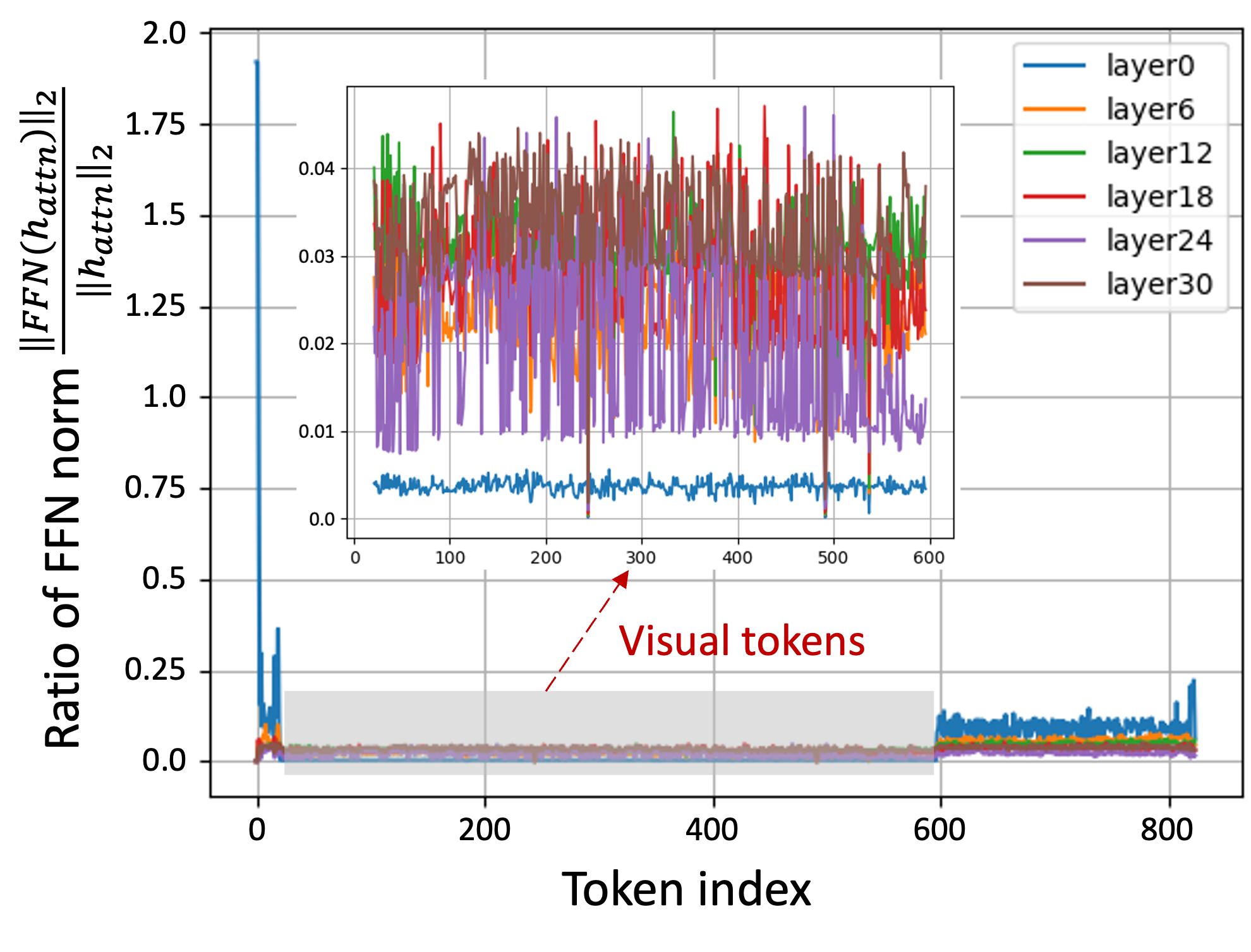}
    }
    \caption{\textbf{FFN imapct for different MLLMs.} We evaluate FFN impact by computing the feature modulus ratio before ($\|h_{\text{attn}}\|_2$) and after FFN ($\|\text{FFN}(h_{\text{attn}})\|_2$). The FFN introduces significantly smaller updates to visual tokens compared to text tokens.}
    \label{fig:norm}
    \vspace{-3mm}
\end{figure*}


Building on these insights, we propose Skip-Vision, a novel and flexible architecture designed for more efficient scaling (see Figure \ref{fig:2} for a comparison). Visual tokens derived from smaller window/scale sizes constitute approximately $80\%$ of the total visual tokens and account for most of the computational overhead. To mitigate this, as shown in Figure \ref{fig:3}, we allow these tokens to skip the Feed-Forward Network (FFN) computation, significantly reducing overall computational costs. To address redundancy within skipped vision tokens, we can apply a token merging strategy for further simplification.
During inference, skipped tokens are removed from the KV cache after the pre-fill phase, accelerating subsequent computation. Empirical results demonstrate that Skip-Vision reduces training time by up to $35\%$, FLOPs of inference by $75\%$ and inference latency by $45\%$, while maintaining or even improving performance. These efficiency gains provide a scalable solution for large-scale multimodal learning, enabling better data utilization and enhanced model scalability.

In summary, Skip-Vision is designed to be seamlessly integrated into the standard SFT pipeline of MLLMs without introducing additional retraining or decoupled modules. It directly modifies the transformer’s computation flow, offering a practical and theoretically grounded acceleration solution for MLLM training and inference jointly. Our contributions are as follows:
\begin{itemize}
\item We introduce a novel and efficient framework, using token merge and skip FFN strategy during training to reduce redundant computation for visual tokens.
\item In inference, our framework employs a skip KV-cache mechanism that removes skip-FFN visual tokens from the KV-cache, enhancing efficiency.
\item We present a theoretical analysis of the performance error in Skip-Vision and establish an error bound, which will be empirically validated through experiments.
\item Experiments show our model's superior efficiency, effective data scaling, and performance on par with state-of-the-art models of similar scale.
\end{itemize}

\section{Related work}
\subsection{Vison-Language models}
Rapid progress in large language models (LLMs)~\cite{llama3,vicuna,qwen2llm} has laid a solid foundation for the emergence of large-scale vision-language models (VLMs)~\cite{flamingo,blip2,qwenvl,shikra}. Flamingo~\cite{flamingo} was a pioneering effort, integrating pre-trained image encoders with LLMs to handle multimodal data, thereby allowing comprehensive understanding and reasoning across modalities. Following this, the major models developed by leading corporations, such as GPT-4v~\cite{gpt4} and Gemini-1.5-Pro~\cite{gemini}, have led the progress in this domain. Using proprietary data sets and undisclosed training methodologies, these models have elevated multimodal intelligence to unprecedented levels. 

At the same time, the open-source community has worked diligently to stay abreast of these advancements. The LLava series models~\cite{llava,llava-improved,llava-uhd,li2024llavanext-strong} exemplify a key strategy in this domain by mapping visual features directly into the input embedding space of the language model, integrating them as input tokens. However, this approach generates a substantial volume of visual tokens, frequently exceeding the number of text tokens, creating efficiency challenges.
InternVL-1.5~\cite{InternVL} adopts dynamic high-resolution techniques, segmenting large images into smaller fragments for processing and scaling them accordingly. MiniCPM-V~\cite{minicpm} utilizes a perceiver resampler structure with a single layer of cross-attention to compress visual tokens. Models such as Vary~\cite{vary}, SPHINX~\cite{sphinx}, Cambrian-1~\cite{cambrian}, and BRAVE~\cite{brave} leverage multi-visual encoder architectures to strengthen their visual processing capabilities. Furthermore, a crucial factor contributing to the effectiveness of modern VLMs is instruction fine-tuning~\cite{instructblip,minigpt,llava-improved}, enabling these models to function as conversational agents and interact with users through natural, human-like dialogue.
\subsection{Efficient multimodal large language models}

To enhance the efficiency of vision-language models, one of the primary and widely employed approaches in contemporary research involves reducing the size of the backbone models~\cite{mobilevlm,tinygpt,tinyllava,bunny}. Simultaneously, given that visual tokens contribute significantly to computational demands, research focused on compressing or refining these tokens has garnered increasing attention. Such methods are often coupled with the design of vision-language projectors. For instance, techniques like the Perceiver Resampler~\cite{perceiver,qwenvl,flamingo} and Q-Former~\cite{blip2} utilize transformer-based mechanisms to consolidate visual tokens into more compact query sets. In this way, the transformer outputs corresponding to the positions of the learnable latent queries serve as the aggregated representation of the visual features~\cite{efficient}. Honeybee~\cite{honeybee} propose two visual projectors, namely C-Abstractor and D-Abstractor. LLaVA-PruMerge~\cite{prumerge} and MADTP~\cite{madtp} introduce adaptive methods for reducing visual tokens. In addition, while the large language model (LLM) field has employed various techniques for token reduction to accelerate inference and compress KV cache~\cite{lminfinite,h2o}, research in this area remains limited within the MLLM domain.

\section{Preliminaries}
\begin{figure*}[t]
  \centering
   \includegraphics[width=\linewidth]{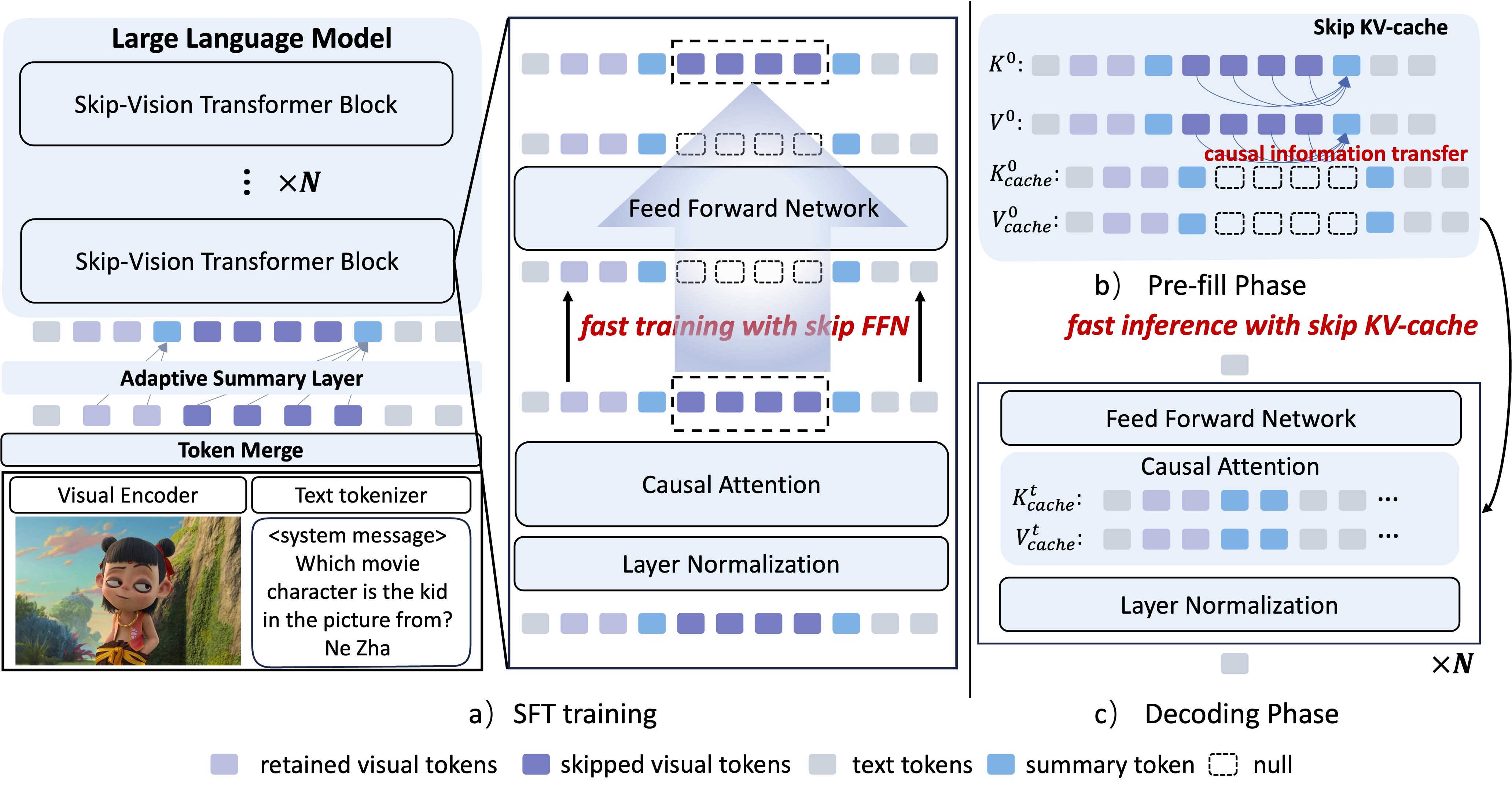}
   \caption{\textbf{The framework of Skip-Vision.} a) While visual scaling enriches visual information, it also increases computational overhead. Skip-Vision uses a skip-FFN strategy during training to reduce redundant computation for visual tokens. The numerous skipped tokens will be limited to the attention layer and bypass FFN. b) At the beginning of inference, Skip-Vision will remove skip-FFN visual tokens from the initial KV-cache, enhancing efficiency. c) During inference, skip attention leverages the skip KV-cache to accelerate generation.}
   \label{fig:3}
   \vspace{-3mm}
\end{figure*} 
\subsection{Decoder-only LLM}
\label{pre}
Each Transformer layer in a decoder-only LLM primarily consists of a self-attention layer and a feed-forward neural network. To facilitate autoregressive inference, the use of a KV-cache has been developed to accelerate the decoding process.

\textbf{Self-Attention Mechanism}: 
The self-attention mechanism enables each token in a sequence to selectively attend to other tokens, capturing contextual dependencies. Each token generates a query $(Q)$, key $(K)$, and value $(V)$ vector. Attention scores are calculated by the scaled dot-product:
\begin{equation}
  \operatorname{Attention}(Q, K, V)=\operatorname{softmax}\left(\frac{Q K^T}{\sqrt{d_k}}\right) V,
  \label{eq1}
\end{equation}
where $d_k$ is the dimension of $K$. This formulation allows each token to incorporate information from others based on relevance, making self-attention critical in capturing relationships across tokens.

\textbf{Feed-Forward Neural Network}: The feed-forward neural network enhances token representations following the self-attention layer. 
The operation can be expressed as:
\begin{equation}
\operatorname{FFN}(x)=\operatorname{Activation}\left(x W_1+b_1\right) W_2+b_2,
\end{equation}
where $x$ is the input, $W_1$ and $W_2$ are weight matrices, and $b_1$ and $b_2$ are biases. This architecture allows the model to capture complex patterns and improve the expressiveness of token representations.

\textbf{KV-cache}: The key-value cache (KV-cache) stores key $(K)$ and value $(V)$ representations from previous self-attention computations, enabling efficient reuse during inference. This approach avoids redundant calculations for earlier tokens, improving inference speed.

The attention score is computed using $\operatorname{Attention}\left(Q, K^t_{\text {cache }}, V^t_{\text {cache }}\right)$,
where $Q$ is the query for the current token.
At each time step $t$, the cache is updated as:
\begin{equation}
K_{\text {cache }}^{(t)}=\left[K_{\text {cache }}^{(t-1)} ; K_t\right], \quad V_{\text {cache }}^{(t)}=\left[V_{\text {cache }}^{(t-1)} ; V_t\right],
\end{equation}
where $K_t$ and $V_t$ are the current token's key and value. This mechanism enhances computational efficiency and accelerates token generation.

\section{Method}
\label{sec:method}
\subsection{Skipped token selection and token merge} 
Before applying the skip FFN strategy, we first identify the skipped and retained visual tokens. The retained tokens should preserve most essential visual information, while the skipped tokens serve as complementary details, ensuring efficiency without significant loss of information.

Fortunately, in MLLM architectures, we can naturally distinguish retained and skipped tokens based on global and local context tokens. Global context tokens are retained, while local context tokens are skipped. In architectures like LLaVA-HD, global tokens originate from the resized full image, whereas local tokens come from smaller image patches. In CoS, global tokens correspond to larger window sizes, while local tokens are derived from smaller windows. Since LLaVA lacks a clear distinction between global and local tokens, inspired by~\cite{visionzip}, we select the top-$n$ tokens most similar to the CLS token as retained tokens, categorizing the rest as skipped tokens.

 To further reduce the computational burden, it is necessary to refine the visual tokens to eliminate excessive redundancy. As shown in Figure \ref{fig:4}, we conducted a similarity analysis of the visual tokens in CoS at different levels, calculating their similarity density. We found that the smaller the granularity of the visual tokens, the higher their average similarity, while their token count is significantly greater than that of other granularities. Consequently, inspired by ToMe~\cite{tome}, we employ a token merge method to reduce redundancy in skipped tokens, as detailed below:
\begin{itemize}
\item \textbf{Compute cosine similarities:} calculate the cosine similarity between each pair of tokens in the set, obtaining an average similarity score for each token.
\item \textbf{Select top $k$ distinctive Tokens:} sort tokens by average similarity in ascending order and retain the top $k$ tokens with the lowest average similarity, representing the most distinctive information.
\item \textbf{Merge remaining tokens:} for the remaining $n-k$ tokens, merge each one with the most similar token among the top $k$, thus consolidating information efficiently.
\end{itemize}

\subsection{Training speed-up: skip FFN}
\label{skip-ffn}

As depicted in Figure \ref{fig:3}a, we propose the FFN skip strategy: The retained visual tokens, which are few in number, proceed conventionally through all decoder layers of the LLM, while the numerous skipped visual tokens will be limited to the self-attention layers of each transformer block. For a given layer $l$,  the output of retained tokens $h_{\text{retained}}^{(l)}$ is the sum of the self-attention output $h_{\text{attn}}^{(l)}$ and the FFN-transformed features $\text{FFN}^{(l)}(h_{\text{attn}}^{(l)})$:
\begin{equation}
h_{\text{retained}}^{(l)}=h_{\text{attn}}^{(l)}+\text{FFN}^{(l)}(h_{\text{attn}}^{(l)}).
\end{equation}
The output of the skipped tokens is:
\begin{equation}
h_{\text{skipped}}^{(l)}=h_{\text{attn}}^{(l)}.
\end{equation}

However, bypassing the FFN layers presents a non-trivial challenge. In the autoregressive training process, the final token of the vision sequence is used to predict the initial token of the text sequence, necessitating a specialized design for this terminal vision token. To facilitate this transition effectively, it must pass through the FFN layers, which are enriched with stored linguistic information~\cite{mass,dissecting}.

To address this, as shown in Figure \ref{fig:3}, we introduce adaptive summary tokens to efficiently consolidate visual information. This token is generated by the Adaptive Summary Layer before the token sequence is input into the LLM. The layer comprises a simple linear transformation, and its computation is defined as follows:
\begin{equation}
x^s=(\operatorname{softmax}(X \cdot W^T))^T \cdot X,
\end{equation}
where $X$ denotes the retained or skipped visual tokens, $x^s \in \mathbb{R}^{1 \times C}$ is the summary token, and $W \in \mathbb{R}^{1 \times C}$ is the weight matrix. Leveraging this layer, we concatenate a summary token on both ends of the skipped visual tokens before their input to the LLM. The first summary token aggregates essential information from retained tokens, mitigating potential loss from extended token sequences. The second summary token integrates prior sequence information and plays a key role in predicting the next text token.
\begin{figure}[t]
  \centering
   \includegraphics[width=0.9\linewidth]
   {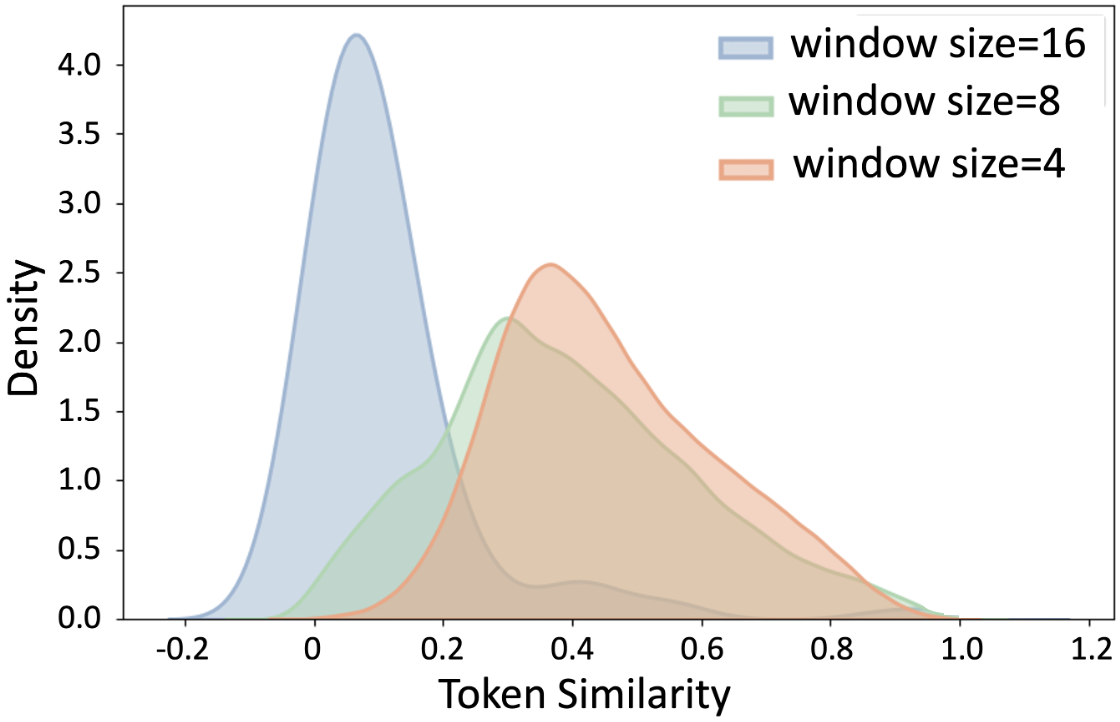}
   \caption{\textbf{Similarity density statistics.} We conducted a similarity density analysis in CoS, examining the visual tokens derived from various window scale. The results indicate that as the window scale decreases, the similarity between visual tokens becomes more pronounced, reflecting higher token redundancy.
   }
   \vspace{-5mm}
   \label{fig:4}
   \vspace{-2mm}
\end{figure}

\subsection{Inference speed-up: skip KV-cache}
After training, our Skip-Vision framework incorporates skip KV-cache method to further improve efficiency:
\begin{equation}
K_{\text {skip-cache }}^{(0)}=K_{\text {cache }}^{(0)} \setminus K_{skip}, \quad V_{\text {skip-cache }}^{(0)}=V_{\text {cache }}^{(0)} \setminus V_{skip},
\label{skip}
\end{equation}
Equation \ref{skip} indicates that visual tokens skipped by FFN are removed from the KV cache after the pre-filling stage. As depicted in Figures \ref{fig:3} b) and c), given the autoregressive nature of LLMs, earlier token information is progressively transferred to later tokens. Skip FFN further enhances this integration by suppressing the attention weights of skipped tokens, forcing the model to focus on key tokens and improving information aggregation efficiency. Empirical results show that, due to the inclusion of summary tokens, this omission does not compromise model performance, underscoring a distinctive advantage of our framework for more efficient inference.

\section{Theoretical analysis of Skip-Vision}
\label{sec:analysis}
At the core of this analysis lies the layer error incurred when bypassing the FFN layer. For a given layer $l$,  the original output $h_{\text{original}}^{(l)}$ is :
$h_{\text{original}}^{(l)}=h_{\text{attn}}^{(l)}+\text{FFN}^{(l)}(h_{\text{attn}}^{(l)}).$
When skipping the FFN, the output becomes:
$
h_{\text{skip}}^{(l)}=h_{\text{attn}}^{(l)}.
$
The per-layer skipping error is:
\begin{equation}
\epsilon^{(l)}=\|h_{\text{original}}^{(l)}-h_{\text{skip}}^{(l)}\|_2=\|\text{FFN}^{(l)}(h_{\text{attn}}^{(l)})\|_2.
\end{equation}
For redundant tokens, such as those in homogeneous image regions, this error is negligible $(\epsilon^{(l)}\approx0)$ due to minimal feature transformations by the FFN.

However, errors propagate through subsequent layers, amplified by the recursive nature of transformer architectures. Leveraging Lipschitz continuity assumptions for self-attention and FFN operations ($L^{(l+1)}_{attn}$ and $L^{(l+1)}_{FFN}$), the cumulative error at layer $l+1$ is bounded by:
\begin{equation}
\epsilon^{(l + 1)}\leq (L^{(l+1)}_{\text{attn}}+L^{(l+1)}_{\text{FFN}})\cdot\epsilon^{(l)}+\epsilon_{\text{skip}}^{(l + 1)},
\end{equation}
where $(\epsilon_{\text{skip}}^{(l + 1)}$ represents new errors from skipping deeper layers $(l + 1)$.

Over $L$ layers, the total error telescopes to:
\begin{equation}
\epsilon_{\text{total}} \leq \sum_{l = 1}^{L}\epsilon_{\text{skip}}^{(l)} \cdot \prod_{i = 1}^{L-l}(L^{(i+1)}_{\text{attn}}+L^{(i+1)}_{\text{FFN}}).
\end{equation}

\begin{theorem}
\label{cor:1}
\textbf{Bounded Lipschitz.}
If the spectral norms of ${W_1}$, ${W_2}$, ${W_K}$, ${W_Q}$, and ${W_V}$ are bounded by 1, then:
\begin{itemize}
    \item $\text{L}(\text{attn})\leq1/\sqrt{d_k}$
    \item $\text{L}(\text{FFN})\leq1$
\end{itemize}
\end{theorem}
Follow Theorem \ref{cor:1}, we assume Lipschitz constants $L_{\text{attn}}+L_{\text{FFN}} \leq \gamma$ and skipping errors $\epsilon_{\text{skip}}^{(l)} \leq \epsilon$, the total error is scaled to:
\begin{equation}
\epsilon_{\text{total}} \leq \epsilon\cdot\frac{\gamma^{L}-1}{\gamma - 1},
\label{bound}
\end{equation}
Equation \ref{bound} establishes that the skip error is bounded when $\gamma<1$, provided the model is trained with modern regularization techniques. This ensures that the MLLM remains less sensitive to the effects of skipping.

This error also impacts the KL divergence between the original and skipped outputs, bounded by:
\begin{equation}
\mathcal{D}_{\text{KL}}(p_{\text{skip}}\parallel p_{\text{original}})\leq\frac{1}{2\sigma^{2}}\cdot\epsilon_{\text{total}}^{2},
\end{equation}
where $\sigma^{2}$ is the variance of the logits. Further integrating feature similarity errors $(\epsilon_{\text{sim}} = O(\sqrt{1 - \theta}))$ from low-attention tokens, the final bound becomes:
\begin{equation}
\mathcal{D}_{\text{KL}}\leq\frac{1}{2\sigma^{2}}\cdot\left(\epsilon_{\text{total}}+\epsilon_{\text{sim}}\right)^{2}.
\end{equation}
Practically, this analysis motivates a layer-wise skipping strategy, alongside token selection and merge based on feature similarity $(\theta)$. \textbf{More analysis and the proof of Theorem \ref{cor:1} are provided in the Appendix.}

\section{Experiments}
\subsection{Benchmarks}
To comprehensively evaluate our model, we conducted experiments on eight different benchmarks. These include MME~\cite{mme}, which assesses the perceptual and cognitive capabilities of multimodal language models; TextVQA~\cite{textvqa}, focused on fine-grained visual question answering; MMBench~\cite{mmbench} and MMStar~\cite{mmstar} for general diagnostic capabilities; MMMU~\cite{mmmu}, designed to test STEM-related reasoning; MathVista~\cite{mathvista}, which evaluates mathematical problem solving skills; OCRBench~\cite{ocr}, for optical character recognition tasks; and MMVet~\cite{mmvet}, used for subjective assessment. Taken together, these benchmarks offer a comprehensive framework for assessing the efficacy of the model in various multimodal challenges.
\subsection{Efficiency evaluation}
We systematically evaluate both training time ($hours$), inference FLOP computation and inference latency ($ms$) on single A100 GPU. For training efficiency, we report actual GPU hours measured on consistent hardware configurations. For inference FLOPs, we adopt the methodology from FastV~\cite{fastv}, specifically tracking the computational demands of processing visual tokens. In the LLama architecture, we calculate FLOPs for the causal attention and feed forward network modules as $4 N C^2+2 N^2 C+3 N C M$ where $N$ is the token count, $C$ denotes the hidden state dimension, and $M$ is the intermediate FFN dimension. Skip-Vision framework adjusts token counts dynamically between the attention and FFN modules, allowing us to refine FLOP calculations as $4 N_1 C^2+2 N_1^2 C+3 N_2 C M$, where 
$N_1$ and $N_2$ represent the visual token counts processed in the causal attention and FFN layers, respectively.

\subsection{Data setup}
During pretraining, we adopt the experimental settings of LLaVA and CoS, utilizing LLaVA-558K~\cite{llava} and 65 million image-text pairs for respective comparisons. For the Supervised Fine-Tuning phase, we use LLaVA-665K~\cite{llava} for all comparative experiments. 
By leveraging the computational efficiency of Skip-Vision framework, the resources saved can be redirected toward scaling the dataset to further enhance model performance. To this end, we extend our dataset to SV-1M, comprising 1 million samples, demonstrating the substantial performance improvements achievable through data expansion. To further assess our approach against state-of-the-art models of similar scale, we extend the dataset to SV-9M, underscoring the scalability and potential of our method. Detailed dataset statistics are provided in the Appendix.
\subsection{Training setup}
\begin{table*}
  \centering
  \setlength\tabcolsep{0.6pt}
  \begin{tabular}{lccccccccccccc}
\hline \multicolumn{1}{c}{} & MME & Textvqa & MMB & MMVet & MMMU & MathV & OCRB & MMStar & Avg & Hours & FLOPs & Latency \\
\hline \textbf{LLaVA setting:} & & & & &  & & & & & & & \\
LLaVA~\cite{llava} & 1535 & 58.6 & 72.2 & 32.8 & 39.6 & 21.3 & 32.7 & 36.5 & 42.0 & 10.2 & 100\% & 125  \\
LLaVA-FastV~\cite{fastv} & 1453 & 56.1 & 71 & 32.4 & 37.3 & 20.6 & 28.8 & 36.3 & 40.4 & 9.6 & 30\% & 103 \\
MQT-LLaVA~\cite{mqt} & 1487 & 53.6 & 69 & 28.2 & 37.8 & 16.8 & 28.1 & 36.5 & 38.6 & 7.5 & 44\% & 106 \\
LLaVA-Tokenpacker~\cite{tokenpacker} & 1538 & 56.2 & 72.3 & 32.1 & 39.0 & 19.3 & 29.8 & 35.0 & 40.5  & 8.8 & 24\% & 97 \\
SV-LLaVA & 1519 & 56.5 & 72.5 & 32.4 & 38.8 & 21.2 & 30.7 & 36.5 & 41.3 & 7.9 & 25\% & 90 \\
 $(N_r=100, N_s=156)$ & \textcolor{blue}{$-1\%$} & \textcolor{blue}{$-3.6\%$} & \textcolor{red}{$+0.4\%$} & \textcolor{blue}{$-1.2\%$} & \textcolor{blue}{$-2\%$} & \textcolor{blue}{$-0.5\%$} & \textcolor{blue}{$-3.6\%$} & \textcolor{blue}{$-6.1\%$} & \textcolor{blue}{$-1.7\%$} & \textcolor{red}{$-22.5\%$}& \textcolor{red}{$-75\%$} & \textcolor{red}{$-28\%$}\\
SV-LLaVA & 1530 & 57.3 & 72.7 & 35.6 & 39.2 & 21.4 & 32.0 & 40.0 & 42.6 & 9.0 & 60\% & 103 \\
$(N_r=256, N_s=320)$ & \textcolor{blue}{$-0.3\%$} & \textcolor{blue}{$-2.2\%$} & \textcolor{red}{$+0.7\%$} & \textcolor{red}{$+8.5\%$} & \textcolor{blue}{$-1\%$} & \textcolor{red}{$+0.5\%$} & \textcolor{blue}{$-2.1\%$} & \textcolor{red}{$+9.6\%$} & \textcolor{red}{$+1.4\%$} & \textcolor{red}{$-11.8\%$} & \textcolor{red}{$-40\%$} & \textcolor{red}{$-17.6\%$}\\
\hline \textbf{LLaVA-HD setting:} & & & & & & & & & & & & \\
LLaVA-HD~\cite{llava-improved} & 1533 & 64 & 71.8 & 35.3 & 39.3 & 20 & 37.4 & 40.5 & 44.0 & 19 & 100\% & 450 \\
SV-LLaVA-HD & 1534 & 61.2 & 73.1 & 35.9 & 39.7 & 21.2 & 34.1 & 39.3 & 43.5 & 12.2 & 50\% & 250 \\
$(N_r=576, N_s=576)$ & \textcolor{red}{$+0\%$} & \textcolor{blue}{$-4.3\%$} & \textcolor{red}{$+1.8\%$} & \textcolor{red}{$+1.7\%$} & \textcolor{red}{$+1.5\%$} & \textcolor{blue}{$-6\%$} & \textcolor{blue}{$-8.8\%$} & \textcolor{blue}{$-2.9\%$} & \textcolor{blue}{$-1.1\%$} & \textcolor{red}{$-35.8\%$}& \textcolor{red}{$-50\%$} & \textcolor{red}{$-44.4\%$}\\
\hline \textbf{CoS setting:} & & & & & & & & & & & & \\
CoS~\cite{cos} & 1585 & 64.4 & 77.1 & 39.4 & 39.2 & 21.5 & 39.2 & 41.2 & 46.0 & 15.3 & 100\% & 160 \\
SV-CoS & 1563 & 63.7 & 76.5 & 41.7 & 40.3 & 21.2 & 37 & 41.9 & 46.0 & 9.8 & 25\% & 90 \\
$(N_r=272, N_s=256)$ & \textcolor{blue}{$-1.4\%$} & \textcolor{blue}{$-1.1\%$} & \textcolor{blue}{$-0.8\%$} & \textcolor{red}{$+5.8\%$} & \textcolor{red}{$+2.8\%$} & \textcolor{blue}{$-1.4\%$} & \textcolor{blue}{$-5.6\%$} & \textcolor{red}{$+1.7\%$} & \textcolor{red}{$-0\%$} & \textcolor{red}{$-40\%$} & \textcolor{red}{$-75\%$} & \textcolor{red}{$-43.8\%$}\\
\hline 
SV-CoS (SV-1M)  & 1569 & 67.1 & 77.1 & 43.3 & 40.7 & 22.0 & 50.7 & 44.4 & 49.3 & 16.3 & 25\% & 90 \\
\hline
\end{tabular}
  \caption{\textbf{Performance and efficiency evaluation.} We evaluate Skip-Vision on LLaVA, LLaVA-HD, and CoS and compare it with state-of-the-art efficiency optimization models under the LLaVA setting. $N_r$ and $N_s$ denote the number of retained and skipped tokens, respectively.}
  \label{tab:3}
  \vspace{-3mm}
\end{table*}

\begin{table*}
  \centering
  \setlength\tabcolsep{4pt}
  \begin{tabular}{lcccccccccc}
\hline \multicolumn{1}{c}{} & LLM & MME & Textvqa & MMB & MMVet & MMMU & MathV & OCRB & MMStar \\
\hline Open-source models in 8B tier: & & & & & & & & \\
mPLUG-Owl3~\cite{mplug-owl3} & Qwen2 & - & 69.0 & \underline{77.6} & 40.1 & - & \textbf{65.0} & - & 40.1 \\
Cambrian~\cite{cambrian} & LLaMA3 & 1547 & \underline{71.7} & 75.9 & - & 42.7 & 49.0 & 62.4 & - \\
LLaVA~\cite{llava} & LLaMA3 & 1535 & 58.6 & 72.2 & 32.8 & 39.6 & 21.3 & 32.7 & 36.5 \\
Mini-Gemini-HD~\cite{mini} & LLaMA3 &\textbf{1606} & 70.2 & 72.7 & - & 37.3 & 37.0 & 47.7 & - \\
LLaVA-Next~\cite{li2024llavanext-strong} & LLaMa3 & \underline{1604} & 64.6 & 72.1 & - & 41.7 & 36.3 & 49.0 & - \\
Ovis~\cite{ovis} & LLaMA3 & - &- & 77.4 & - & \underline{44.7} & 40.8 & - & \underline{49.5} \\
MiniCPM-V2.5~\cite{minicpm} & LLaMA3 & - &\textbf{76.6} & 77.6 & - & \textbf{45.8} & 54.3 & \textbf{72.5} & - \\
\hline SV-CoS (SV-9M) & LLaMA3 & 1550 & 69.1 & \textbf{78.4} & \textbf{51.7} & 43 & \underline{61.3} & \underline{69.4} & \textbf{52.3} \\
\hline
\end{tabular}
  \caption{\textbf{MLLM evaluation.} We present the performance of SV-CoS on SV-9M, comparing it against the current SOTA models of a similar scale. The highest scores are highlighted in \textbf{bold}, while the second-highest scores are indicated with \underline{underlines}.}
  \label{tab:2}
  \vspace{-4mm}
\end{table*}

\begin{table*}
  \centering
  \setlength\tabcolsep{4pt}
  \begin{tabular}{lccccccccccc}
\hline & MME & Textvqa & MMB & MMVet & MMMU & MathV & OCRB & MMStar & Avg & Latency \\
\hline
SV-LLaVA (w/o SK)& 1519 & 56.7 & 72.5 & 31.4 & 38.8 & 21.2 & 30.9 & 36.5 & 41.1 & 103  \\
\multirow{2}{*}{SV-LLaVA} & 1519 & 56.5 & 72.5 & 32.4 & 38.8 & 21.2 & 30.7 & 36.5 & 41.3 & 90  \\
& \textcolor{red}{$+0\%$} & \textcolor{blue}{$-0.4\%$} & \textcolor{red}{$+0\%$} & \textcolor{red}{$+3.2\%$} & \textcolor{red}{$+0\%$} & \textcolor{red}{$+0\%$} & \textcolor{blue}{$-0.6\%$} & \textcolor{red}{$+0\%$} & \textcolor{red}{$+0.5\%$} & \textcolor{red}{$-9.7\%$} \\
\hline
SV-LLaVA-HD (w/o SK)& 1533 & 61.6 & 73.0 & 35.8 & 39.9 & 21.2 & 34.5 & 39.7 & 43.7 & 310  \\
\multirow{2}{*}{SV-LLaVA-HD} & 1534 & 61.2 & 73.1 & 35.9 & 39.7 & 21.2 & 34.1 & 39.3 & 43.5 & 250  \\
& \textcolor{red}{$+0\%$} & \textcolor{blue}{$-0.7\%$} & \textcolor{red}{$+0.1\%$} & \textcolor{red}{$+0.3\%$} & \textcolor{blue}{$-0.5\%$} & \textcolor{red}{$+0\%$} & \textcolor{blue}{$-1.2\%$} & \textcolor{blue}{$-1\%$} & \textcolor{blue}{$-0.5\%$} & \textcolor{red}{$-19.4\%$} \\
\hline
SV-LLaVA-CoS (w/o SK)& 1562 & 63.9 & 76.5 & 40.2 & 40.2 & 21.2 & 37.2 & 41.9 & 45.9 & 126  \\
\multirow{2}{*}{SV-LLaVA-CoS} & 1563 & 63.7 & 76.5 & 41.7 & 40.3 & 21.2 & 37 & 41.9 & 46.0 & 90  \\
& \textcolor{red}{$+0\%$} & \textcolor{blue}{$-0.3\%$} & \textcolor{red}{$+0\%$} & \textcolor{red}{$+3.7\%$} & \textcolor{red}{$+0.2\%$} & \textcolor{red}{$+0\%$} & \textcolor{blue}{$-0.5\%$} & \textcolor{red}{$+0\%$} & \textcolor{red}{$+0.2\%$} & \textcolor{red}{$-28.6\%$} \\
\hline
\end{tabular}
  \caption{\textbf{Ablation on skip KV-cache.} We evaluate SK (using skip KV-cache during inference) of Skip-Vision on different settings.}
  \label{tab:sk}
  \vspace{-3mm}
\end{table*}

\begin{figure*}[htp]
    \centering
    \subfloat[Ablation on LLaVA]{%
        \includegraphics[width=0.3\textwidth]{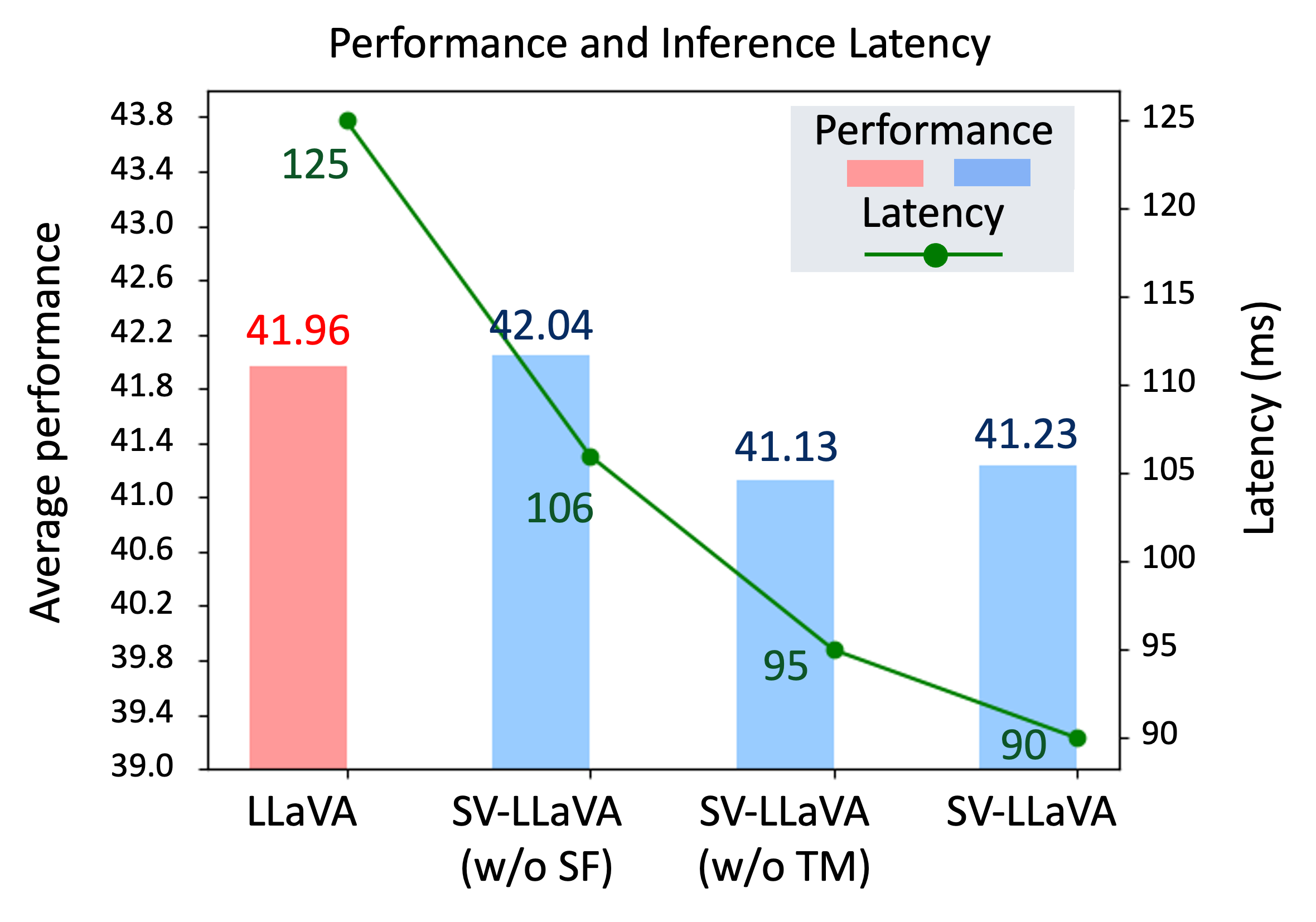}
    }
    \subfloat[Ablation on CoS]{%
        \includegraphics[width=0.3\textwidth]{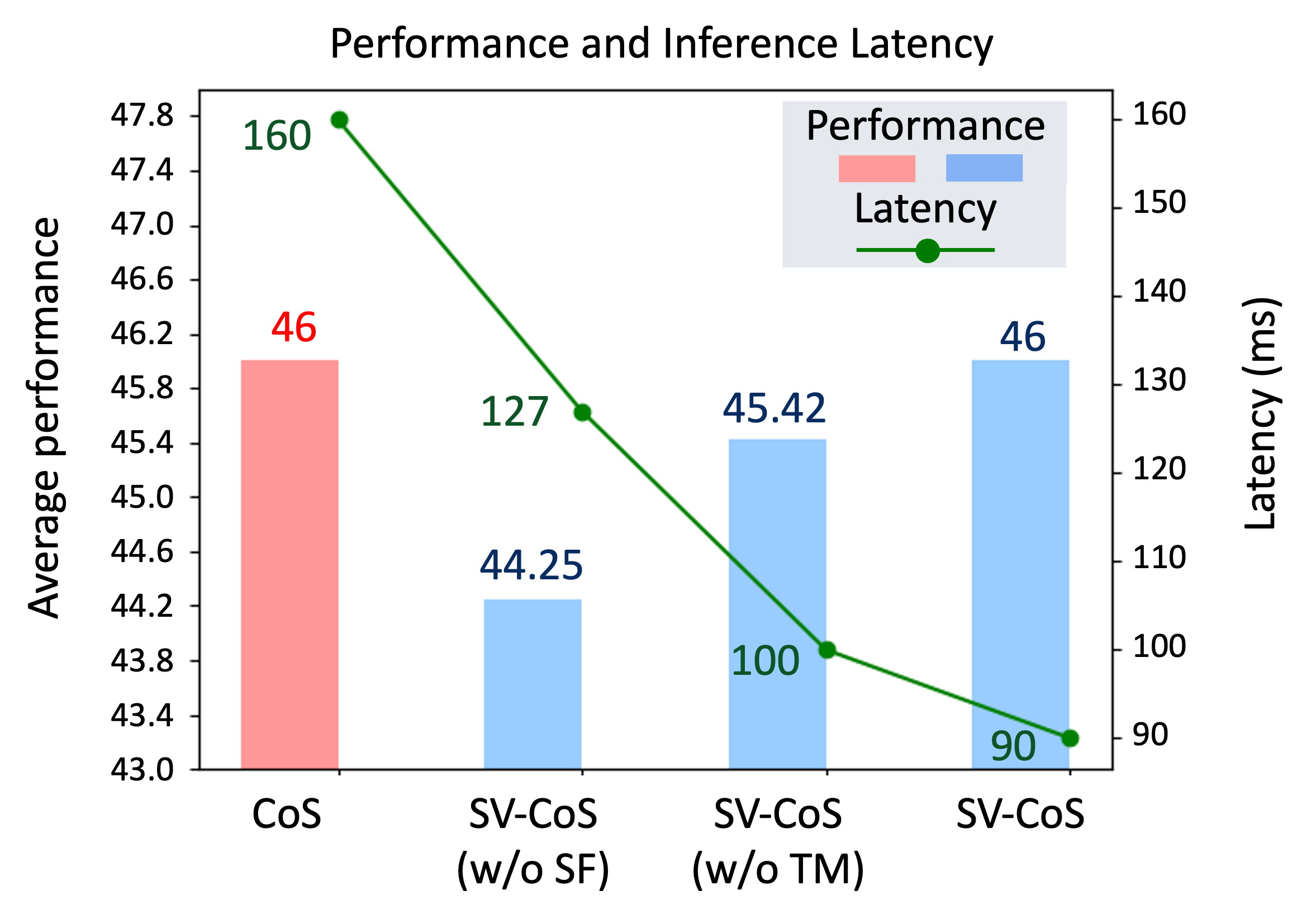}
    }
    \subfloat[Ablation on LLaVA-HD]{%
        \includegraphics[width=0.3\textwidth]{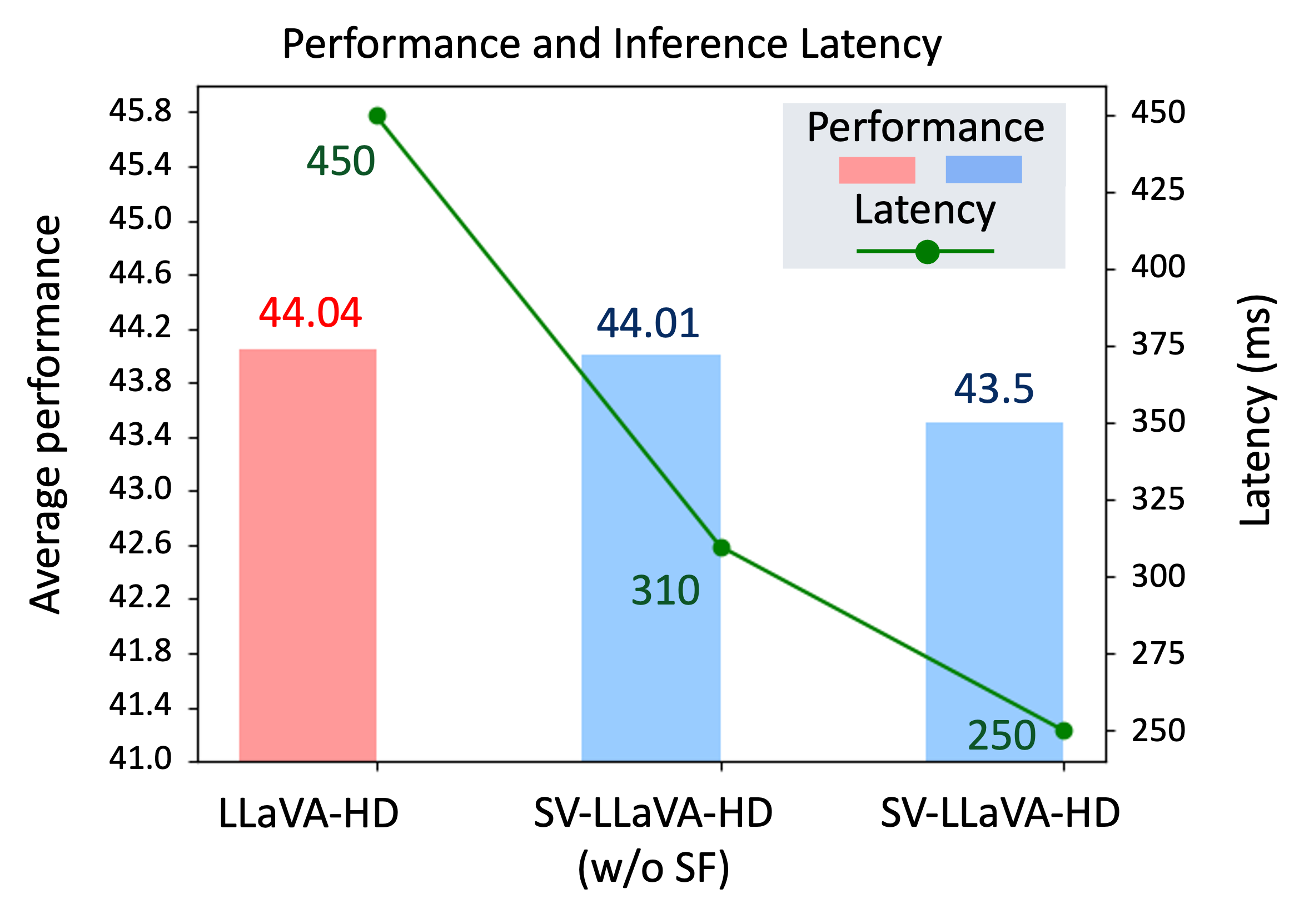}
    }
    \caption{
    \textbf{Ablation study.} We performed an ablation study on each component of the Skip-Vision. SF (skip FFN), TM (token merge). 
    }
    \label{fig:latency}
    \vspace{-5mm}
\end{figure*}
\begin{figure}[htp]
  \centering
   \includegraphics[width=\linewidth]{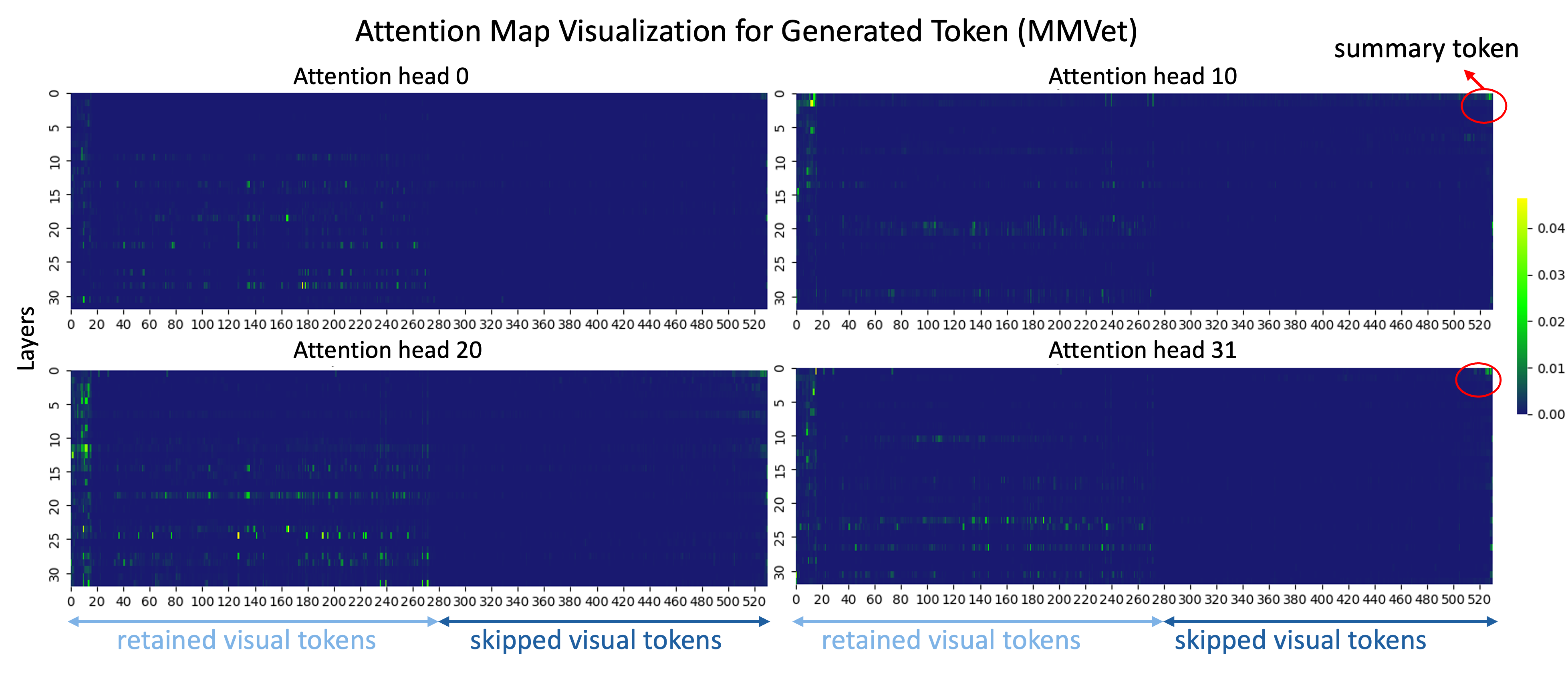}
   \caption{\textbf{Visualization of attention map in MMVet.}}
   \label{fig:6}
   \vspace{-7mm}
\end{figure}

\begin{figure}[htp]
  \centering
   \includegraphics[width=\linewidth]{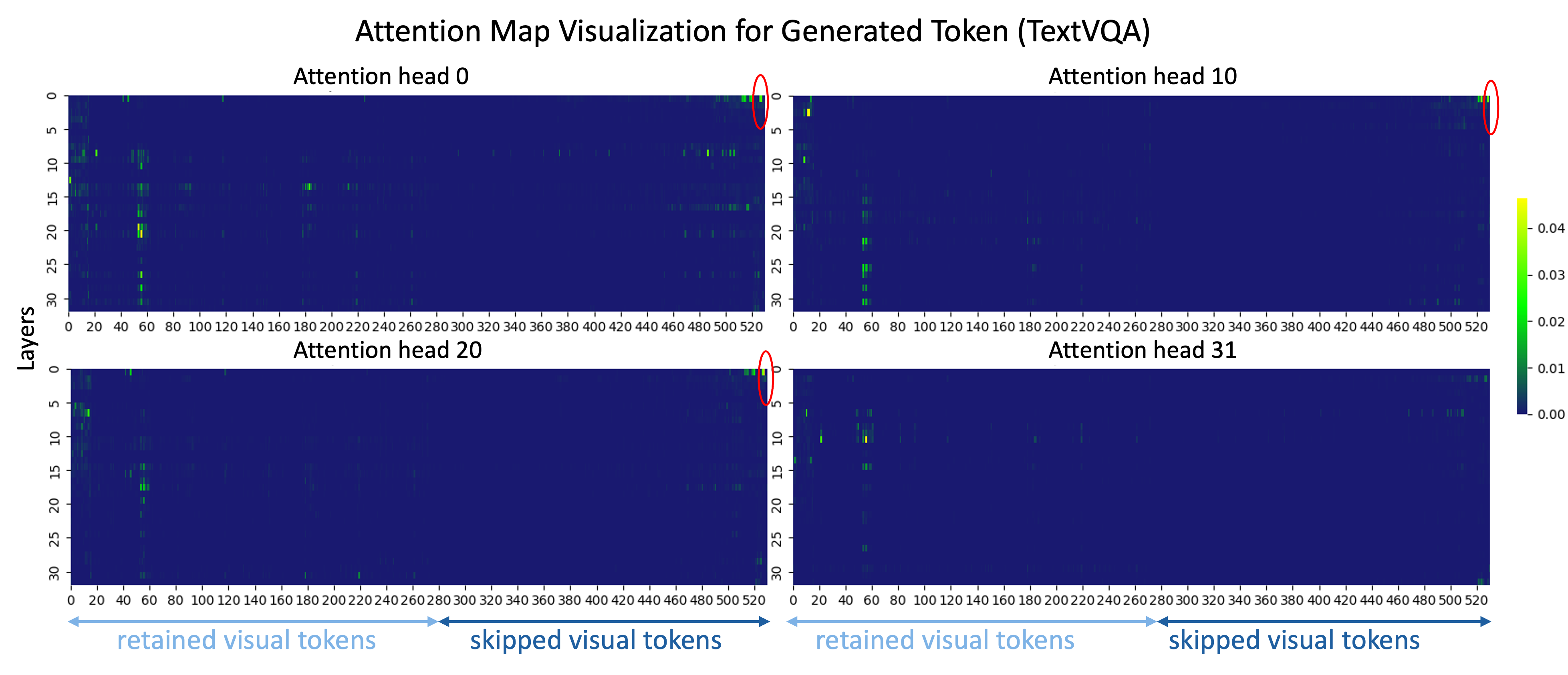}
   \caption{\textbf{Visualization of attention map in TextVQA.}}
   \label{fig:7}
   \vspace{-3mm}
\end{figure}
For LLaVA and LLaVA-HD setting, we use input resolutions of 336 and a dynamic resolution ranging from 336 to 672, respectively, with CLIP ViT-L/336px~\cite{clip} as the vision encoder and LLaMA3 8B Instruct~\cite{llama3} as the backbone (additional comparisons under LLaVA-1.5-7B training setup are in the Appendix \ref{llava-1.5-exp}).
For CoS setting, pretraining is conducted at 224 resolution, training only the multi-scale visual resampler with CLIP ViT-L/224px~\cite{clip}. The model processes 80 visual tokens, including 16 global and 64 local tokens, and scales to 336 and 1296 tokens during SFT. In SFT, the resolution increases to 448, and all model parameters are unfrozen for full optimization.

In all experiments, we used a learning rate of $1e^{-3}$ for pre-training, which is reduced to $1e^{-5}$ during SFT. Training follows a cosine decay schedule after a $3\%$ warm-up phase. For skipped token selection, unless otherwise specified, the LLaVA setting includes 156 skipped tokens and 100 retained tokens, the LLaVA-HD setting has 576 skipped and 576 retained tokens, and the CoS setting consists of 256 skipped and 272 retained tokens. All skipped tokens undergo a token merging process to enhance efficiency.
\subsection{Main experimental results}
\textbf{Performance and efficiency.} 
We first evaluate the effectiveness of Skip-Vision across three different frameworks: CoS, LLaVA, and LLaVA-HD, demonstrating its ability to maximize efficiency while minimizing performance degradation. As shown in Figure \ref{fig:2} and Table \ref{tab:3}, Skip-Vision reduces training time by $22.5\%$, inference FLOPs by $75\%$, and inference latency by $28\%$ for LLaVA. For LLaVA-HD, it achieves a $35.8\%$ reduction in training time, $50\%$ in inference FLOPs, and $45\%$ in inference latency. In CoS, it reduces training time by $35.8\%$, inference FLOPs by $75\%$, and inference latency by $44\%$.

\textbf{Scaling.} We evaluated the scalability of Skip-Vision, leveraging CoS's visual scaling to increase visual tokens from 336 to 1296 while maintaining efficiency. Expanding the fine-tuning dataset from LLaVA-665k to SV-1M further improved performance by 8\% over $CoS_{1296}$ with similar resources. To benchmark against state-of-the-art models, we scaled to SV-9M, with training completed in 72 hours on 16 NVIDIA A100 GPUs (Table \ref{tab:2}).

\subsection{Ablation study and analysis}
To validate the effectiveness of each component within the Skip-Vision framework, we conducted ablation and comparative experiments on token merge, skip FFN, and skip KV-cache.

As shown in Figure \ref{fig:latency}, both the skip FFN strategy and token merging enhance computational efficiency across different model settings while maintaining performance. Notably, skip FFN performs better in the CoS setting than in LLaVA and LLaVA-HD. We attribute this to CoS allowing the vision backbone to be fine-tuned during the SFT stage, enabling key information to be more effectively captured by summary tokens, thereby reducing the errors introduced by skip FFN. As shown in Table \ref{tab:sk}, skip KV-cache slightly affects fine-grained tasks such as TextVQA and OCRBench but improves performance in long-form response tasks like MMVet.

To further analyze this, we visualized attention maps across layers for visual tokens in SV-CoS during the MMVet and TextVQA tasks. In MMVet (Figure \ref{fig:6}), the attention of generated tokens primarily focuses on retained and summary tokens, with skip KV-cache filtering out unnecessary noise. In TextVQA (Figure \ref{fig:7}), which requires fine-grained understanding, attention is spread across both retained and skipped tokens, leading to some performance loss with skip KV-cache. However, causal attention effectively aggregates information from skipped tokens into the final summary tokens, allowing us to retain only the summary tokens and retained visual tokens in the KV-cache.

\section{Conclusion}
We propose Skip-Vision, an efficient multimodal large language model architecture that boosts computational efficiency through a skip-FFN strategy, reducing redundant computations over visual tokens during training. During inference, Skip-Vision uses a skip KV-cache to accelerate processing by omitting non-essential visual tokens from the KV cache. Extensive experiments show our model outperforms current methods in efficiency and scales effectively with data, delivering competitive performance among leading models of similar scale.

\section{Acknowledgements}
This work was supported in part by NSFC (62201342), and Shanghai Municipal Science and Technology Major Project (2021SHZDZX0102). This work was supported by Ant Group Research Intern Program.

{
    \small
    \bibliographystyle{ieeenat_fullname}
    \bibliography{main}
}
\clearpage
\setcounter{page}{1}
\maketitlesupplementary

\section{Proof and detailed analysis of Skip-Vision}
\label{sec:analysis}
In this section, we provide a theoretical analysis to justify the rationale behind Skip-Vision and quantify the performance loss introduced by skipping FFN computations for redundant visual tokens.

\subsection{Bounded error of Skip FFN}
At the core of this analysis lies the layer error incurred when bypassing the FFN layer. For a given layer $l$,  the original output $h_{\text{original}}^{(l)}$ is :
\begin{equation}
h_{\text{original}}^{(l)}=h_{\text{attn}}^{(l)}+\text{FFN}^{(l)}(h_{\text{attn}}^{(l)}).
\end{equation}
When skipping the FFN, the output becomes:
\begin{equation}
h_{\text{skip}}^{(l)}=h_{\text{attn}}^{(l)}.
\end{equation}
The per-layer skipping error is:
\begin{equation}
\epsilon^{(l)}=\|h_{\text{original}}^{(l)}-h_{\text{skip}}^{(l)}\|_2=\|\text{FFN}^{(l)}(h_{\text{attn}}^{(l)})\|_2.
\end{equation}
For redundant tokens, such as those in homogeneous image regions, this error is negligible $(\epsilon^{(l)}\approx0)$ due to minimal feature transformations by the FFN.

However, errors propagate through subsequent layers, amplified by the recursive nature of transformer architectures. Leveraging Lipschitz continuity assumptions for self-attention and FFN operations ($L^{(l+1)}_{attn}$ and $L^{(l+1)}_{FFN}$), the cumulative error at layer $l+1$ is bounded by:
\begin{equation}
\epsilon^{(l + 1)}\leq (L^{(l+1)}_{\text{attn}}+L^{(l+1)}_{\text{FFN}})\cdot\epsilon^{(l)}+\epsilon_{\text{skip}}^{(l + 1)},
\end{equation}
where $(\epsilon_{\text{skip}}^{(l + 1)}$ represents new errors from skipping deeper layers $(l + 1)$.

Over $L$ layers, the total error telescopes to:
\begin{equation}
\epsilon_{\text{total}} \leq \sum_{l = 1}^{L}\epsilon_{\text{skip}}^{(l)} \cdot \prod_{i = 1}^{L-l}(L^{(i+1)}_{\text{attn}}+L^{(i+1)}_{\text{FFN}}).
\end{equation}

\begin{theorem}
\textbf{Lipschitz Constants for Causal Attention and FFN in Transformers}
Assume:
\begin{enumerate}
    \item Inputs are normalized (e.g., via LayerNorm), bounding intermediate feature norms.
    \item Weight matrices in attention ($W_Q$, $W_K$, $W_V$) and FFN ($W_1$, $W_2$) have bounded spectral norms (maximum singular values).
\end{enumerate}
Then the Lipschitz constants of these components satisfy:
\begin{enumerate}
    \item Causal Attention:
    \begin{equation}
    \text{L}(\text{Attn}) \leq \frac{\|W_Q\|_2\|W_K\|_2\|W_V\|_2}{\sqrt{d_k}},
    \end{equation}
where $d_k$ is the key dimension. The causal mask further restricts attention dependencies, preserving this bound.
    \item Feed - Forward Network (FFN):
    \begin{equation}
\text{L}(\text{FFN}) \leq \|W_1\|_2\|W_2\|_2,
    \end{equation}
assuming the activation function (e.g., ReLU, GELU) is 1-Lipschitz.
\end{enumerate}
\end{theorem}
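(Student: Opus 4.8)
The plan is to bound the Lipschitz constant of each sub-block by composing the Lipschitz constants of its constituent linear maps and nonlinearities, using the fact that the Lipschitz constant of a composition is at most the product of the individual Lipschitz constants, and that a linear map $x \mapsto xW$ has Lipschitz constant $\|W\|_2$ (its spectral norm). For the FFN, this is immediate: $\mathrm{FFN}(x) = \phi(xW_1 + b_1)W_2 + b_2$, where $\phi$ is assumed $1$-Lipschitz (ReLU, GELU), so $\|\mathrm{FFN}(x) - \mathrm{FFN}(y)\|_2 \le \|W_2\|_2 \cdot 1 \cdot \|W_1\|_2 \cdot \|x-y\|_2$, giving $\mathrm{L}(\mathrm{FFN}) \le \|W_1\|_2\|W_2\|_2$; the biases drop out since they cancel in the difference. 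When the $W_i$ are orthogonal ($\|W_i\|_2 = 1$), this yields $\mathrm{L}(\mathrm{FFN}) \le 1$.

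For the attention block the argument is more delicate because the softmax-weighted combination is nonlinear in the queries and keys, not just the values. First I would treat the "easy" dependence: holding the attention weight matrix $A = \mathrm{softmax}(QK^\top/\sqrt{d_k})$ fixed, the map $V \mapsto AV$ is linear with operator norm $\|A\|_2 \le 1$, since $A$ is row-stochastic (each row is a probability vector, so $\|A\|_\infty = 1$, and combined with $\|A\|_1$ control or a direct argument one gets $\|A\|_2 \le 1$); composing with $V = xW_V$ contributes a factor $\|W_V\|_2$. The causal mask only zeroes out some entries and renormalizes, so the rows remain probability vectors and the same bound holds — this is the sense in which "the causal mask further restricts attention dependencies, preserving this bound." Then I would handle the dependence through $Q = xW_Q$ and $K = xW_K$: perturbing $x$ changes the logits $QK^\top/\sqrt{d_k}$, and one uses that softmax is Lipschitz (with constant at most $1/2$ in a suitable norm, or simply $\le 1$) together with the bilinear bound $\|QK^\top - Q'K'^\top\| \lesssim \|Q\|\|K - K'\| + \|K'\|\|Q - Q'\|$, where the intermediate feature norms $\|Q\|, \|K\|$ are bounded by the normalization assumption (LayerNorm bounds the input norm, hence $\|Q\|_2 \le \|W_Q\|_2 \cdot \mathrm{const}$, etc.). Collecting the $1/\sqrt{d_k}$ from the scaling and the three spectral norms $\|W_Q\|_2\|W_K\|_2\|W_V\|_2$, absorbing the normalization constants, yields $\mathrm{L}(\mathrm{Attn}) \le \|W_Q\|_2\|W_K\|_2\|W_V\|_2/\sqrt{d_k}$, which specializes to $1/\sqrt{d_k}$ in the orthogonal case.

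The main obstacle is making the softmax-through-$Q,K$ part genuinely rigorous: softmax is only globally Lipschitz, and the logits depend bilinearly on the (unbounded) features, so without the normalization assumption the composite map is not Lipschitz at all. The cleanest route is to invoke assumption (1) to confine all intermediate activations to a bounded region, on which the bilinear form $QK^\top$ is Lipschitz with a constant controlled by $\max(\|Q\|_2, \|K\|_2)$, and then note that the $1/\sqrt{d_k}$ factor from the scaled dot-product, together with the contraction property of softmax, exactly offsets the growth so that the stated clean bound holds (possibly up to a constant that the paper folds into the normalization assumption). I would state this dependence on the normalization radius explicitly rather than hide it, then remark that under the orthogonality hypothesis of the preceding theorem all spectral norms equal one and the bounds collapse to $\mathrm{L}(\mathrm{Attn}) \le 1/\sqrt{d_k}$ and $\mathrm{L}(\mathrm{FFN}) \le 1$, recovering Theorem~\ref{cor:1} and hence the condition $\gamma < 1$ used in the error bound \eqref{bound}.
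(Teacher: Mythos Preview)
Your proposal is correct and follows essentially the same approach as the paper: both bound the FFN by composing the Lipschitz constants of the two linear layers with the $1$-Lipschitz activation, and both bound causal attention by chaining the spectral norms of $W_Q,W_K,W_V$, the $1/\sqrt{d_k}$ scaling, and the fact that softmax is at most $1$-Lipschitz, invoking the LayerNorm assumption to control the bilinear logit map $QK^\top$. Your explicit product-rule decomposition for attention (fix $A$, vary through $V$; then vary $A$ through $Q,K$) is slightly more careful than the paper's sequential-composition sketch, but the ingredients and the resulting bounds are identical.
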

\begin{proof}

\textbf{The Lipschitz constant of causal attention} 

\textbf{1.Linear Transformation:}

The input sequence $X \in \mathbb{R}^{n \times d}$ undergoes three linear transformations to obtain the query $Q = XW_Q$, the key $K = XW_K$, and the value $V = XW_V$. The Lipschitz constant of each linear transformation is the spectral norm (the largest singular value) of its weight matrix, denoted as $\sigma_Q=\|W_Q\|_2$, $\sigma_K = \|W_K\|_2$, and $\sigma_V=\|W_V\|_2$ respectively.

\textbf{2.Attention Score Calculation:}

The scaled dot - product $S = QK^{\top}/\sqrt{d_k}$. The Lipschitz constant of the bilinear mapping is related to $\sigma_Q$ and $\sigma_K$. If the norm of the input $X$ is bounded (for example, through LayerNorm), then:
\begin{equation}
\text{Lip}(S) \leq \frac{\sigma_Q\sigma_K}{\sqrt{d_k}}.
\end{equation}

\textbf{3. Softmax Activation:}

After applying the causal mask, softmax is performed on each row. The Lipschitz constant of Softmax under the $\ell_2$ norm is less than 1, that is:
\begin{equation}
\text{Lip}(\text{softmax}) \leq 1.
\end{equation}

\textbf{4. Weighted Sum of Values:} The output $\text{Attn}(X)=AV$, where $A = \text{softmax}(S)$. The Lipschitz constant of this step is determined by the spectral norm $\sigma_V$ of the linear transformation of $V$.

\textbf{Overall Lipschitz Constant}

Combining the upper bounds of each step:
\begin{equation}
\text{Lip}(\text{CausalAttention}) \leq \frac{\sigma_Q\sigma_K\sigma_V}{\sqrt{d_k}}.
\end{equation}

\textbf{Impact of Causal Mask:} The mask restricts the attention range, which may reduce the sensitivity to the input. Therefore, the actual Lipschitz constant will not exceed the above-mentioned upper bound.

\textbf{The Lipschitz Constant of FFN}

The FFN is usually expressed as:
\begin{equation}
\text{FFN}(x)=W_2\cdot\text{Activation}(W_1x + b_1)+b_2,
\end{equation}
where the Lipschitz constant of activation functions (such as ReLU, GELU) is 1.

Derivation of Lipschitz Constant

1. Linear Layer \(W_1\): The spectral norm is \(\sigma_1=\|W_1\|_2\).

2. Activation Function: \(\text{Lip}(\text{Activation}) = 1\).

3. Linear Layer \(W_2\): The spectral norm is \(\sigma_2=\|W_2\|_2\).

\textbf{Follow the equation 7 in \cite{lipschitz}}, the overall Lipschitz constant is the product of the spectral norms of the two linear layers:
\begin{equation}
\text{Lip}(\text{FFN})\leq\sigma_1\sigma_2.
\end{equation}
\end{proof}
\begin{corollary}
\textbf{Bounded Lipschitz}
If $W_Q$, $W_K$, $W_V$, $W_1$, $W_2$ are orthogonal matrices (spectral norm = 1), then:
\begin{itemize}
    \item $\text{L}(\text{attn})\leq1/\sqrt{d_k}$
    \item $\text{L}(\text{FFN})\leq1$
\end{itemize}
\end{corollary}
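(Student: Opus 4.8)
The plan is to obtain the Corollary as an immediate specialization of the preceding Theorem, so the only substantive work is to pin down the spectral norm of an orthogonal matrix and then substitute into the two bounds already established.

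First I would recall the elementary fact that for an orthogonal matrix $W$ (i.e.\ $W^\top W = I$) the linear map $x \mapsto Wx$ is an isometry, so every singular value of $W$ equals $1$ and hence $\|W\|_2 = 1$. Applying this to each of $W_Q$, $W_K$, $W_V$ gives $\|W_Q\|_2 = \|W_K\|_2 = \|W_V\|_2 = 1$, and likewise $\|W_1\|_2 = \|W_2\|_2 = 1$. I would then plug these values directly into the bounds from the Theorem: from $\text{L}(\text{Attn}) \leq \|W_Q\|_2\|W_K\|_2\|W_V\|_2/\sqrt{d_k}$ we read off $\text{L}(\text{Attn}) \leq 1/\sqrt{d_k}$, and from $\text{L}(\text{FFN}) \leq \|W_1\|_2\|W_2\|_2$ we read off $\text{L}(\text{FFN}) \leq 1$. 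Nothing else in the derivation changes: the per-row softmax map is still $1$-Lipschitz in $\ell_2$, the activation is still assumed $1$-Lipschitz, the normalization hypothesis bounding intermediate feature norms is inherited verbatim, and the causal mask can only decrease the Lipschitz constant, so the full chain of inequalities from the Theorem goes through unchanged.

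The one point that needs a word of care is that the FFN projections (and, depending on the head layout, the attention projections) are typically rectangular, so the word "orthogonal" must be interpreted as "having orthonormal rows or columns", i.e.\ a partial isometry $W$ with $W^\top W = I$ or $WW^\top = I$; in that case the nonzero singular values are still all equal to $1$, hence $\|W\|_2 = 1$ and the substitution remains valid. I expect this bookkeeping about the rectangular case — rather than any genuine analytic difficulty — to be the main thing to state precisely; once it is settled, the Corollary follows in a single line from the Theorem.
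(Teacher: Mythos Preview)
Your proposal is correct and matches the paper's approach: the Corollary is stated immediately after the Theorem without a separate proof, so it is meant as the direct substitution $\|W_Q\|_2=\|W_K\|_2=\|W_V\|_2=\|W_1\|_2=\|W_2\|_2=1$ into the Theorem's bounds, exactly as you describe. Your additional remark about interpreting ``orthogonal'' as a partial isometry in the rectangular case is a welcome clarification that the paper itself leaves implicit.
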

If we assume Lipschitz constants $L_{\text{attn}}+L_{\text{FFN}} \leq \gamma$ and skipping errors $\epsilon_{\text{skip}}^{(l)} \leq \epsilon$, the total error is scaled to:
\begin{equation}
\epsilon_{\text{total}} \leq \epsilon\cdot\frac{\gamma^{L}-1}{\gamma - 1},
\end{equation}
Theorem \ref{cor:1} establish that the skip error is bounded when
$\gamma<1$, provided the model is trained with modern regularization techniques. This ensures that the Multimodal Large Language Model (MLLM) remains less sensitive to the effects of skipping.

This error also impacts the KL divergence between the original and skipped outputs, bounded by:
\begin{equation}
\mathcal{D}_{\text{KL}}(p_{\text{skip}}\parallel p_{\text{original}})\leq\frac{1}{2\sigma^{2}}\cdot\epsilon_{\text{total}}^{2},
\end{equation}
where $\sigma^{2}$ is the variance of the logits. 

\begin{proof}
For two Gaussian distributions $p = \mathcal{N}(\mu_p, \Sigma_p)$ and $q = \mathcal{N}(\mu_q, \Sigma_q)$, their KL divergence is:
\begin{align}\nonumber
&\mathcal{D}_{\text{KL}}(p \| q)\\\nonumber
&=\frac{1}{2}(\text{tr}(\Sigma_q^{-1}\Sigma_p)+(\mu_q - \mu_p)^T\Sigma_q^{-1}(\mu_q - \mu_p)\\
&-k+\ln\frac{|\Sigma_q|}{|\Sigma_p|})
\end{align}
where $k$ is the dimension. If we assume that the covariances of the two distributions are the same, i.e., $\Sigma_p=\Sigma_q = \sigma^2 I$, and the total difference in means is $\epsilon_{\text{total}}$, then:
\begin{equation}
\mathcal{D}_{\text{KL}}(p \| q)=\frac{1}{2\sigma^2}\|\mu_p - \mu_q\|^2.
\end{equation}
Here, $\|\mu_p - \mu_q\|^2$ is $\epsilon_{\text{total}}^2$, so:
\begin{equation}
\mathcal{D}_{\text{KL}}(p \| q)\leq\frac{1}{2\sigma^2}\epsilon_{\text{total}}^2.
\end{equation}
\end{proof}

Further integrating feature similarity errors $(\epsilon_{\text{sim}} = O(\sqrt{1 - \theta}))$ from low-attention tokens, the final bound becomes:
\begin{equation}
\mathcal{D}_{\text{KL}}\leq\frac{1}{2\sigma^{2}}\cdot\left(\epsilon_{\text{total}}+\epsilon_{\text{sim}}\right)^{2}.
\end{equation}
Practically, this analysis motivates a layer-wise skipping strategy, alongside token selection and token merge based on feature similarity $(\theta)$.

\section{More experimental results}
\subsection{Efficiency.} 
\label{llava-1.5-exp}
Following the LLaVA-1.5-7B training setup, we conducted additional comparisons between Skip-Vision and several recent works, as shown in the table \ref{tab:long_vision}.
MMVet, MMStar and MMBench highlight Skip-Vision’s strength in \textbf{capturing causal and global information}. These benchmarks emphasize high-level reasoning and abstraction, which benefit from Skip-Vision’s ability to \textbf{preserve essential information flow} while reducing redundant computations. By skipping FFN and KV-cache for less informative tokens, the model \textbf{amplifies signal from key visual cues} and enhances \textbf{causal token interactions}.
While this comes with a \textbf{slight trade-off} in fine-grained tasks (OCR, Textvqa), it reflects a deliberate balance between perception and reasoning, favoring tasks that rely on semantic integration over detail fidelity.
\begin{table}[h]
    \centering
    \vspace{-2mm}
    \resizebox{\linewidth}{!}{
    \begin{tabular}{l|ccccc}
    \toprule
    Method & GQA & MMB & VQA$^{\text{Text}}$ & MMVet & Avg. \\
    Vanilla (576 tokens) & 61.9 & 64.7 & 58.2 & 31.1 & 100\% \\
    \midrule
    SparseVLM~\cite{sparsevlm} (64 tokens) & 52.7 & 56.2 & 51.8 & 23.3 & 85.2\% \\
    VisionZip~\cite{visionzip} (64 tokens) & 55.1 & 60.1 & 55.5 & 31.7 & 93.7\% \\
    PDrop~\cite{pyramiddrop} (64 tokens) & 47.5 & 58.8 & 50.6 & - & - \\
     FasterVLM~\cite{fastvlm} (58 tokens) & 54.9 & 60.6 & 55.3 & 30.1 & 93.1\% \\
     LLaVA-PruMerge~\cite{prumerge} (32 tokens) & - & 60.9 & 56.0 & - & - \\
     Skip-Vision ($N_r=64, N_s=156$) & \textbf{60.8} & \textbf{65.1} & \textbf{57.4} & \textbf{32.5} & \textbf{100.0}\% \\
    \bottomrule
    \end{tabular}
    }
    \caption{Comparison with more methods}
    \label{tab:long_vision}
    \vspace{-5mm}
\end{table}

Under the cos setting, we conduct more experiments to evaluate the training and inference efficiency. We compare with methods: FastV~\cite{fastv}, Victor~\cite{victor} and mean average pool, fine-tuning on LLaVA-665k using 8 NVIDIA A100 GPUs. As shown in Figure \ref{fig:5}, our architecture outperforms in both metrics. Compared to the $CoS_{1296}$ baseline, it achieves comparable performance with $35\%$ less training time and $74\%$ reduced inference computation. FastV, unable to utilize flash attention, shows a significant disadvantage, even surpassing baseline training time.
\begin{figure}[htbp]
  \centering
  \setlength{\abovecaptionskip}{0.cm}
   \includegraphics[width=\linewidth]{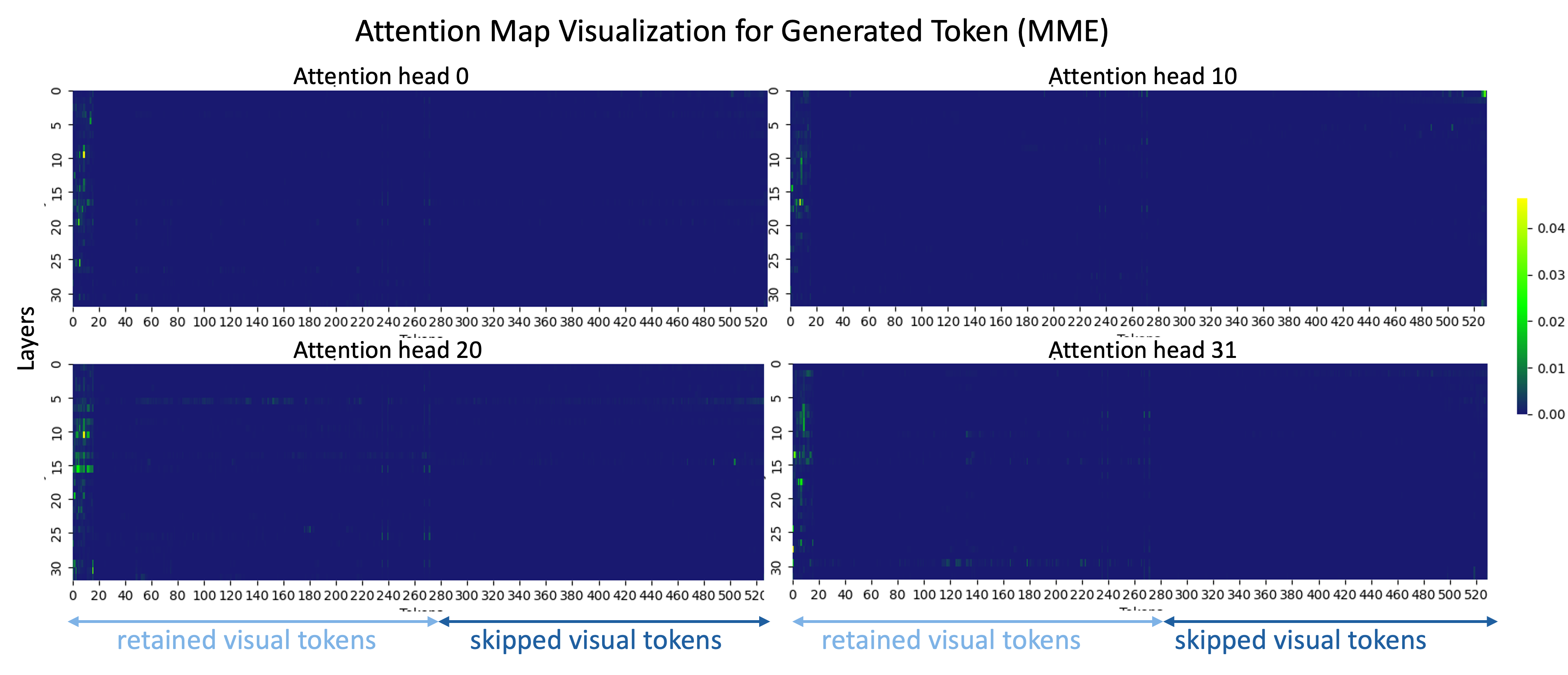}
   \caption{\textbf{Visualization of attention map in MME.}}
   \label{fig:mme}
   \vspace{-3mm}
\end{figure}

\begin{figure}[htbp]
  \centering
  \setlength{\abovecaptionskip}{0.cm}
   \includegraphics[width=\linewidth]{figure/textvqa.png}
   \caption{\textbf{Visualization of attention map in TextVQA.}}
   \label{fig:textvqa}
   \vspace{-5mm}
\end{figure}
\begin{figure}[htp]
  \centering
   \includegraphics[width=0.9\linewidth]{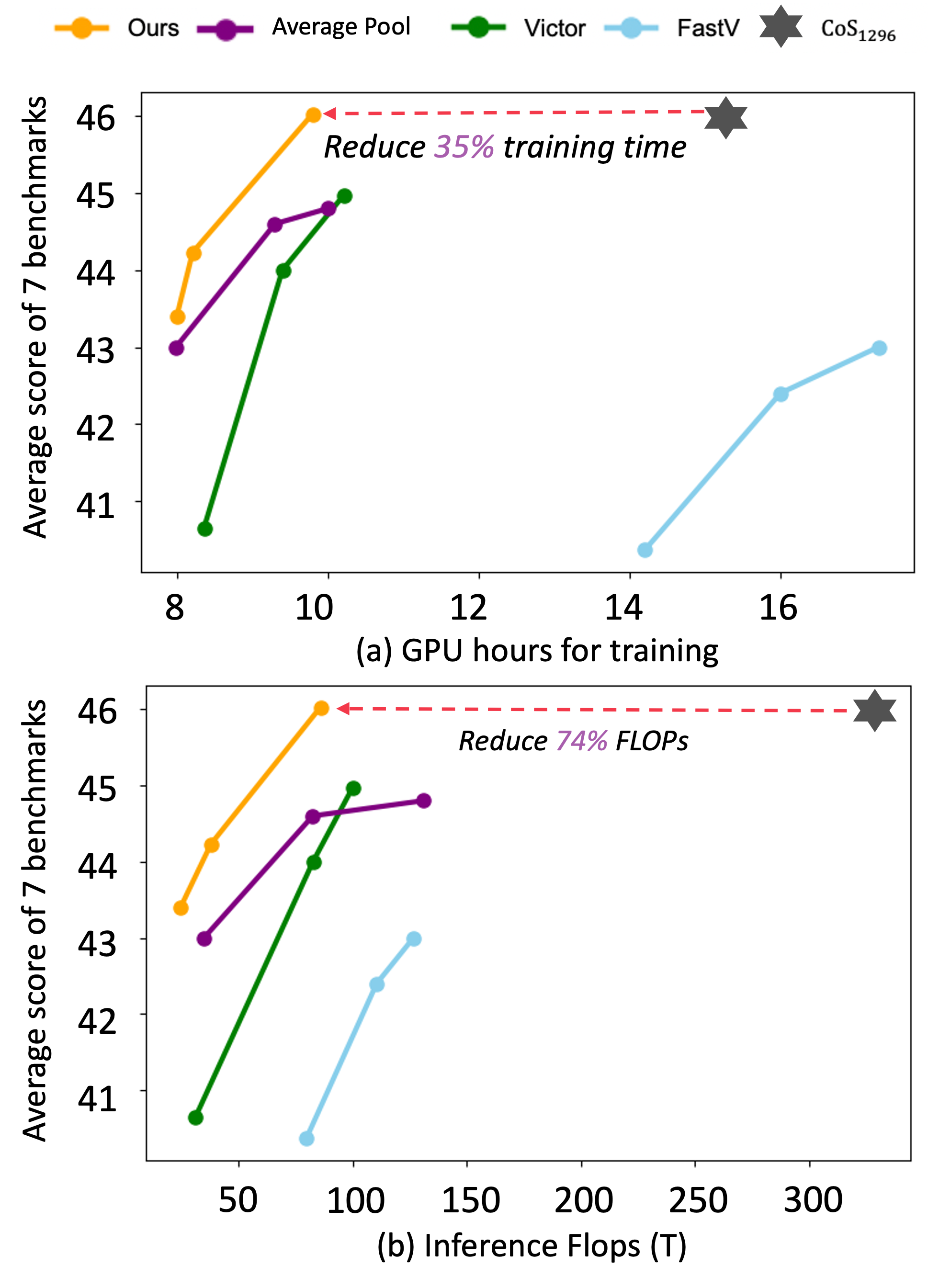}
   \vspace{-3mm}
   \caption{\textbf{Performance vs. Training-Time and Inference FLOPs.} Under the cos setting, we compare Skip-Vision with three MLLM acceleration methods, showing clear advantages in training speed and inference efficiency under equivalent computational constraints.}
   \label{fig:5}
   \vspace{-5mm}
\end{figure}
\subsection{Ablation study} 
\begin{table*}
  \centering
  \setlength\tabcolsep{3pt}
\begin{tabular}{ccccccc|ccccccccc} 
& SF & FS & LS & Merge & LV & SK & MME & Textvqa & MMB & MMVet & MMMU & MathV & OCRB & MMStar & Overall \\
\hline
0& $CoS_{1296}$& & & & & & 1585 & 64.4 & 77.1 & 39.4 & 39.2 & 21.5 & 39.2 & 41.2 & 46 \\
\hline
1& & & &$\checkmark$ & & & 1548 &  64.5 & 75.3  & 39.7  &39.1 &21.8  &37.7 &40.9 &45.6 \\
2& $\checkmark$& & & & & & 1589 & 63.6  & 74.4  & 36.9  & 38 & 19.8 & 365 &39.8& 44.2 \\
3& $\checkmark$& & & &$\checkmark$ & & 1591 &63.7 & 74.9&39.4 &38.7 &20.7 &36.3 & 39.9&44.8 \\
4& $\checkmark$& &$\checkmark$ & & & & 1560 &63.8 & 75.5&39.0 &39.2 &21 &37.1 & 40.0&45.0 \\
5& $\checkmark$& & $\checkmark$& & $\checkmark$ & & 1580 & 63.5 & 74.2 & 40.6 & 39.2 & 21.6 & 37.2 & 40.9 & 45.3 \\
6& $\checkmark$& $\checkmark$ & $\checkmark$& & & & 1570 & 63.5 & 74.1 & 40 & 39.8 & 20.1 & 39 & 41.4 & 45.4 \\
7& $\checkmark$ & &$\checkmark$ &$\checkmark$ & & & 1593 &62.6 & 74.7 & 40.3 & 40.2 & 21.6 & 36.6 & 41.1 & 45.3 \\
8& $\checkmark$ & $\checkmark$ &$\checkmark$ &$\checkmark$ & & & 1571&63.6 & 75.9 & 40.3 & 40.3 & 21 & 36.1 & 41.5 & 45.6 \\
9& $\checkmark$ & &$\checkmark$ & $\checkmark$ & $\checkmark$ & & 1547 & 64.0 & 75.9 & 38 & 41.2 & 21.9  & 36.9  & 40.6  &  45.5 \\
10& $\checkmark$ & $\checkmark$ & $\checkmark$ & $\checkmark$& $\checkmark$& & 1562 & 63.9 & 76.5 & 40.2 & 40.2 & 21.2 & 37.2 & 41.9 & 45.9 \\
11& $\checkmark$ & $\checkmark$ & $\checkmark$ & $\checkmark$& $\checkmark$& $\checkmark$ & 1563 & 63.7 & 76.5 & 41.7 & 40.3 & 21.2 & 37.0 & 41.9 & 46 \\
\hline
\end{tabular}
  \caption{
  \textbf{Ablation study.} To establish a strong baseline, we performed an ablation study on each component of the Skip-Vision framework with the LLAVA 665k SFT dataset. SF (skip FFN), FS (former summary token), LS (latter summary token), Merge (reducing local visual tokens from 1024 to 256), LV (passing the last local visual token through the FFN), SK (using skip KV-cache during inference). This analysis highlights the distinct contributions of each element to efficiency and performance.}
  \label{tab:3}
  \vspace{-3mm}
\end{table*}

\begin{table*}
  \centering
  \setlength\tabcolsep{3pt}
\begin{tabular}{cc|ccccccccc} 
Inference method & Skip window size & MME & Textvqa & MMB & MMVet & MMMU & MathV & OCRB & MMStar & Overall \\
\hline
Without skip KV-cache & - & 1562 & \textbf{63.9} & \textbf{76.5} & 40.2 & 40.2 & \textbf{21.2} & \textbf{37.2} & \textbf{41.9} & 45.9 \\
Skip KV-cache & middle+small & 1562 & 61.8 & 76.5 & 32.6 & \textbf{40.4} & \textbf{21.2} & 22.6 & \textbf{41.9} & 42.4 \\
Skip KV-cache & small & \textbf{1563} & 63.7 & \textbf{76.5} & \textbf{41.7} & 40.3 & \textbf{21.2} & 37.0 & \textbf{41.9} & \textbf{46} \\
\hline
\end{tabular}
  \caption{ 
  \textbf{Ablation study of skip KV-cache.}We report skip KV-cache performance across different visual token window sizes. Skip-Vision enables task-specific optimization by adjusting skip KV-cache levels for tailored acceleration.}
  \vspace{-3mm}
  \label{tab:4}
\end{table*}
To validate the effectiveness of each component within the Skip-Vision framework, under CoS setting, we conducted ablation and comparative experiments on the skip FFN, summary token, token merge, last visual token, and skip KV-cache. The detailed experimental results are presented in the table \ref{tab:3}.

\textbf{The last summary token or final visual token must pass through the FFN.} As discussed in Section \ref{skip-ffn}, the final visual token plays a crucial role in predicting the subsequent text token, thereby requiring access to the textual knowledge encoded within the FFN layers. This integration of information is essential. Compare (2, 3, 4, 5) in Table \ref{tab:3}, employing a summary token as the final visual token has demonstrated enhanced effectiveness compared to a standard visual token. Furthermore, our experimental findings reveal that optimal performance is achieved when both the summary token and the last local visual token are processed through the FFN.

\textbf{The former summary token enhances the model's comprehension of overall visual information.} Compare (4,6), (7,8), (9,10) in Table \ref{tab:3}, the former summary token enhances the emphasis on crucial information by adaptively merging large-scale visual features. This approach addresses the challenge posed by overly lengthy sequences of visual tokens bypassing the FFN, which may result in the omission of critical large-scale visual context.

\textbf{The local token merge strategy seamlessly aligns with the skip-vision framework.}  Compare (0,1), (4,7), (6,8) in Table \ref{tab:3}, when loacl token merge is directly applied to the $CoS_{1296}$ baseline, the model's overall performance declines, reflecting its dependency on redundant visual data when all tokens pass through the FFN. In contrast, within the skip-vision framework, merging local tokens results in improved performance, indicating that our architecture efficiently leverages visual information without requiring excessive redundancy.

\textbf{The skip KV-cache mechanism enables adaptive selection of visual tokens to skip based on task-specific information requirements.} As demonstrated in Table \ref{tab:4}, for tasks such as MMB, MMU, and MMStar that do not necessitate fine-grained information, both middle and small window-size visual tokens can be skipped, with only the initial and final summary tokens retained. For detail-sensitive tasks like TextVQA and OCRBench, we skip only the small window-size tokens that bypass the FFN, thereby preserving critical fine-grained details. Applying skip KV-cache with small window-size tokens during inference improves performance, particularly in tasks requiring extended responses, such as MMVet.

\subsection{More visualizations}

\begin{figure}[htbp]
  \centering
  \setlength{\abovecaptionskip}{0.cm}
   \includegraphics[width=\linewidth]{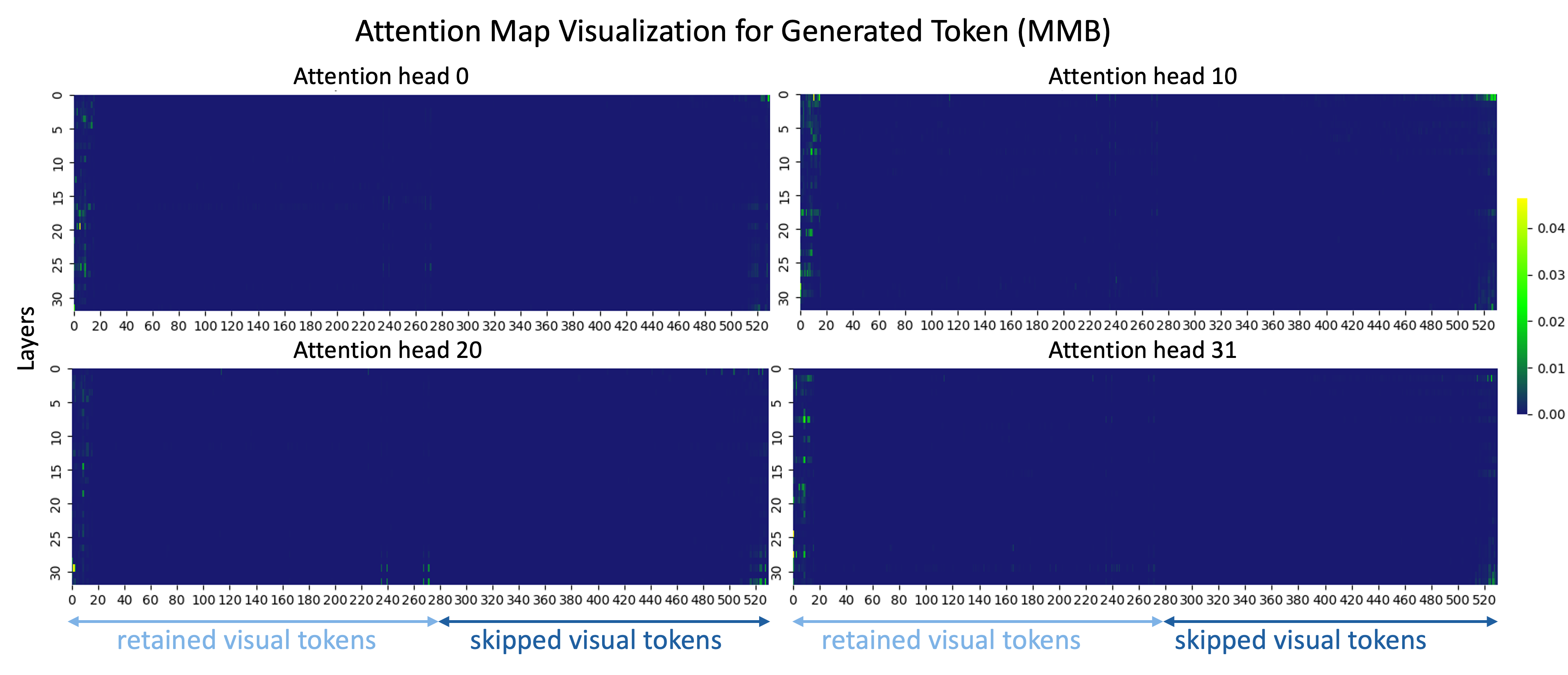}
   \caption{\textbf{Visualization of attention map in MMBench.}}
   \label{fig:mmb}
   \vspace{-5mm}
\end{figure}

\begin{figure}[htbp]
  \centering
  \setlength{\abovecaptionskip}{0.cm}
   \includegraphics[width=\linewidth]{figure/mmvet.png}
   \caption{\textbf{Visualization of attention map in MMVet.}}
   \label{fig:mmvet}
   \vspace{-5mm}
\end{figure}

\begin{figure}[htbp]
  \centering
  \setlength{\abovecaptionskip}{0.cm}
   \includegraphics[width=\linewidth]{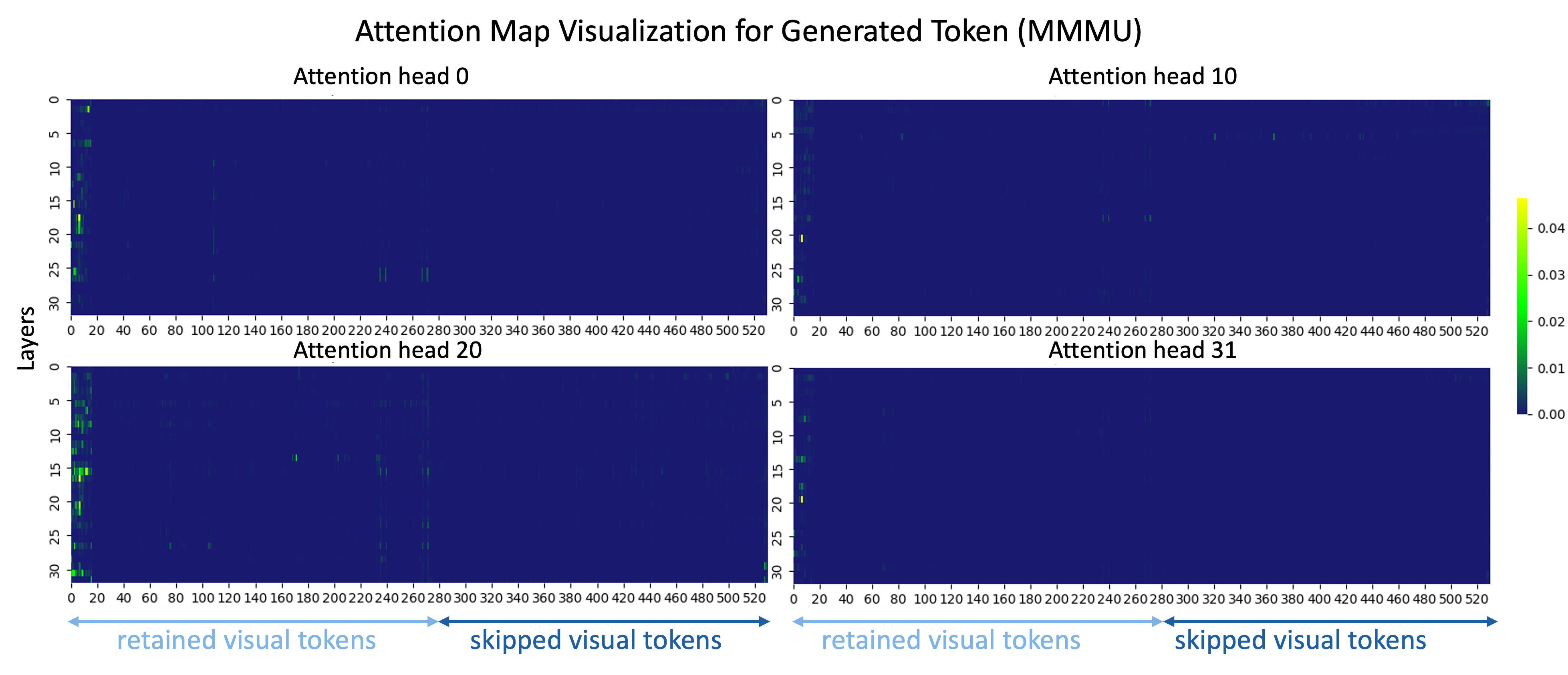}
   \caption{\textbf{Visualization of attention map in MMMU.}}
   \label{fig:mmmu}
   \vspace{-5mm}
\end{figure}

\begin{figure}[htbp]
  \centering
  \setlength{\abovecaptionskip}{0.cm}
   \includegraphics[width=\linewidth]{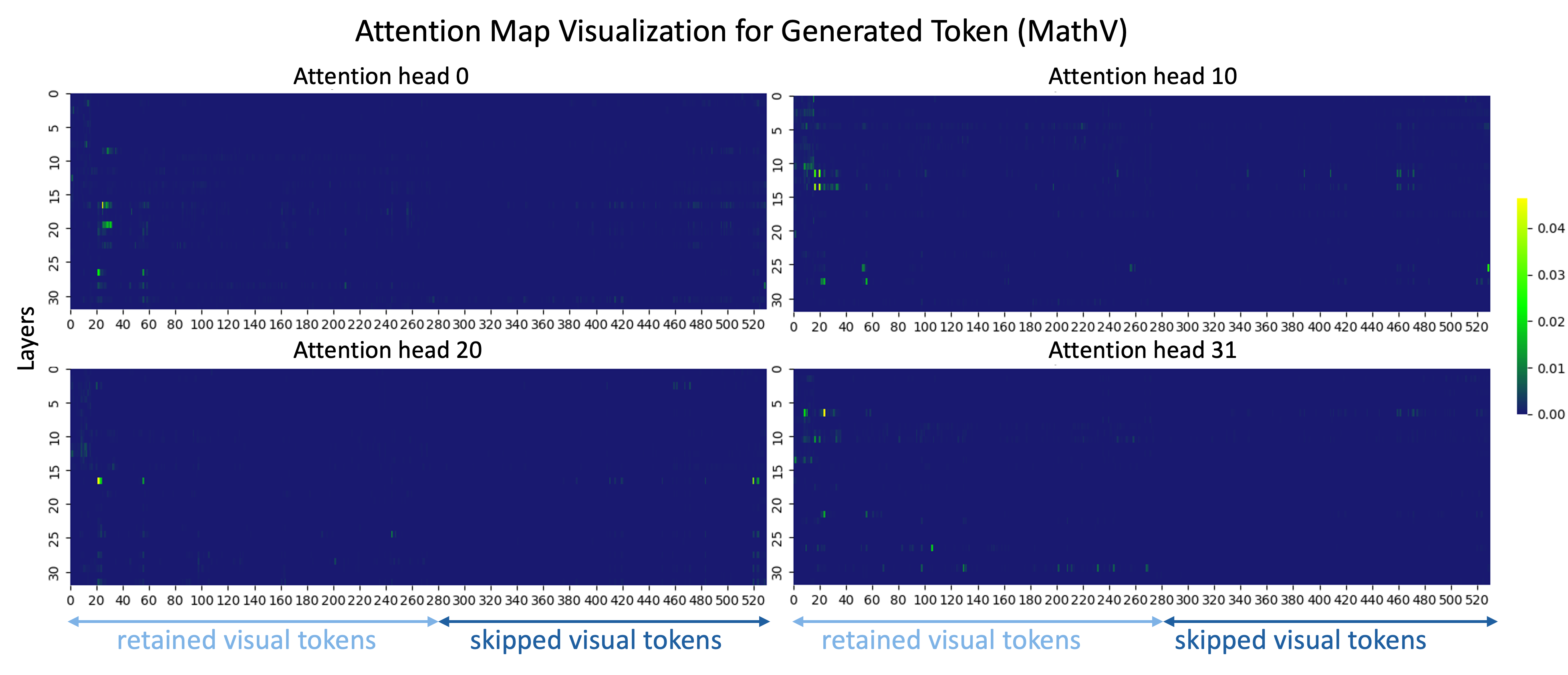}
   \caption{\textbf{Visualization of attention map in MathV.}}
   \label{fig:math}
   \vspace{-5mm}
\end{figure}

\begin{figure}[htbp]
  \centering
  \setlength{\abovecaptionskip}{0.cm}
   \includegraphics[width=\linewidth]{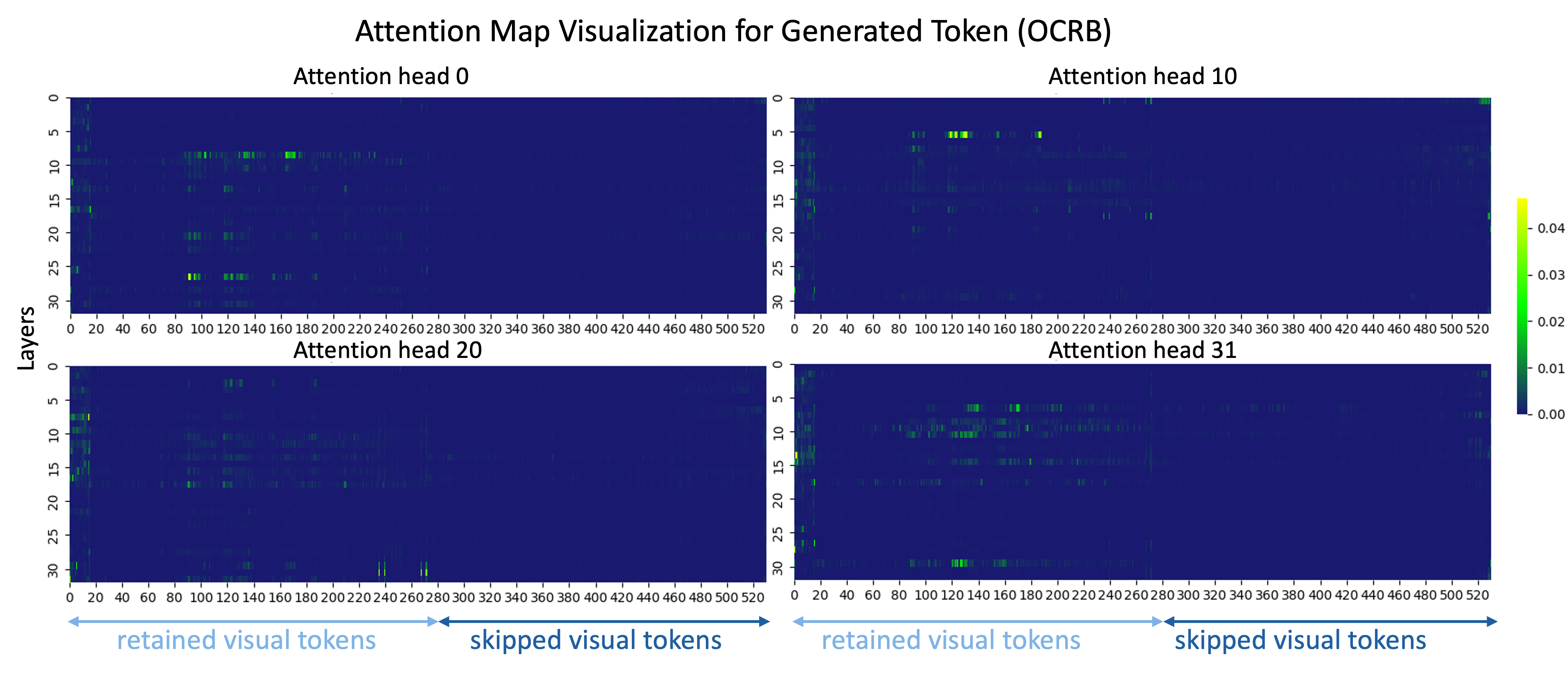}
   \caption{\textbf{Visualization of attention map in OCRBench.}}
   \label{fig:ocr}
   \vspace{-5mm}
\end{figure}

\begin{figure}[htbp]
  \centering
  \setlength{\abovecaptionskip}{0.cm}
   \includegraphics[width=\linewidth]{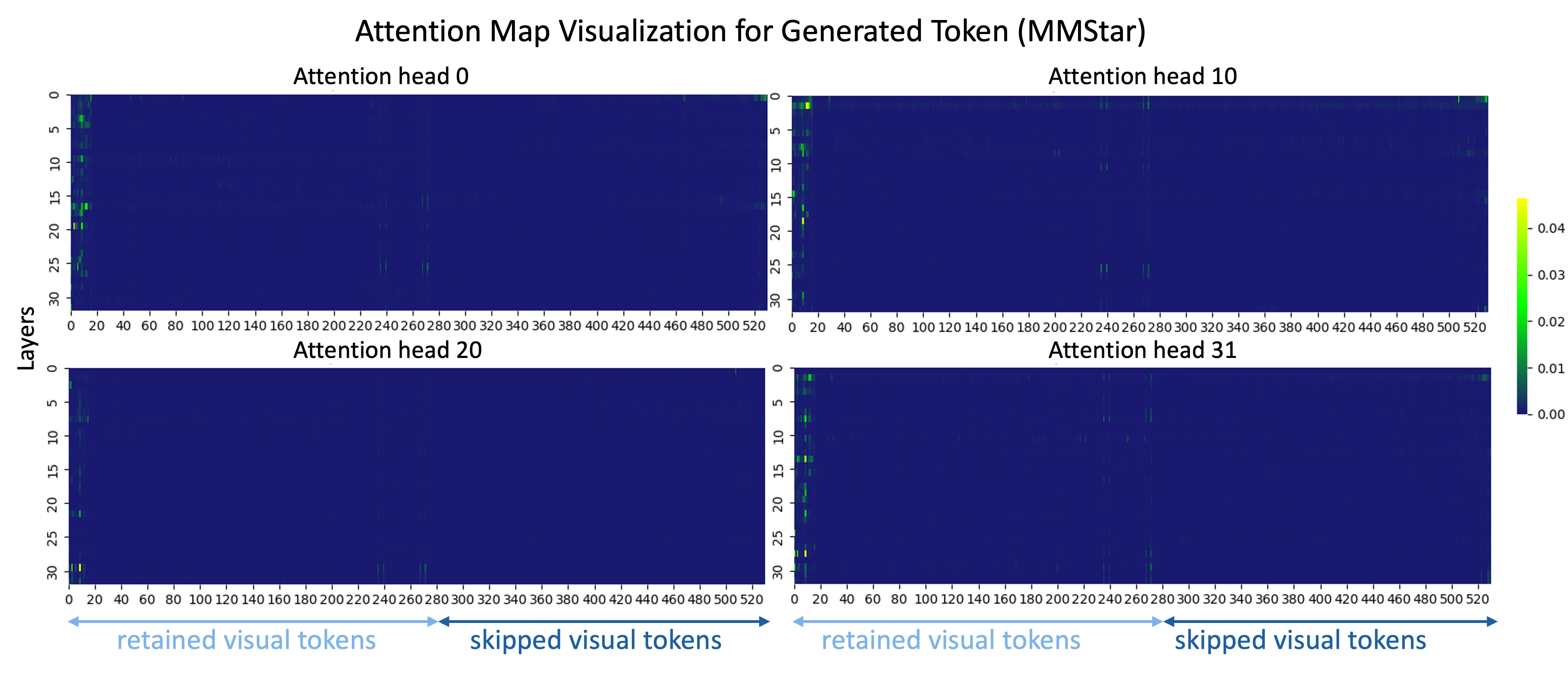}
   \caption{\textbf{Visualization of attention map in MMStar.}}
   \label{fig:mmstar}
   \vspace{-3mm}
\end{figure}

\subsection{Dataset}
\label{sec:dataset}
In Table \ref{tab:SK-1M} and Table \ref{tab:SK-9M}, we introduce the two scaling datasets used by Skip-Vision during the SFT stage: SK-1M and SK-9M.

\clearpage
\begin{table*}[htp]
\centering
\begin{tabular}{p{0.3\linewidth} p{0.6\linewidth}}
\toprule
Task & Dataset \\
\midrule
Visual Instruction Tuning & LLaVA-665k~\cite{llava}, SVIT~\cite{svit}\\
\midrule
VQA & CREPE~\cite{crepe},Imagenet multi task~\cite{imagenet}, VQA-rad~\cite{vqa-rad} \\
\midrule
Visual Reasoning & Wikitable~\cite{wikitable}, Super-CLEVR~\cite{superclevr}, VSR~\cite{vsr} \\
\midrule
Knowledge & ViQuAE~\cite{viquae},Kvqa~\cite{kvqa}, Websrc~\cite{websrc} \\
\midrule
Chart / Diagram / graph & ChartQA~\cite{chartqa},Iconqa~\cite{iconqa}, Infographicvqa~\cite{infographicvqa}, \\
\midrule
Document & DeepForm~\cite{deepform}, TAT-QA~\cite{tat}, Visualmrc~\cite{visualmrc}, Docvqa~\cite{docvqa}, Sujet-Finance-QA-Vision-100k~\cite{Sujet-Finance-QA-Vision-100k} \\
\midrule
Math & Mathverse~\cite{mathverse} \\
\midrule
OCR / Screen / Scene text & TQA~\cite{tqa}, HW-SQuAD~\cite{squad}, TextVQA~\cite{textvqa}, ST-VQA~\cite{stvqa},TextOCR-GPT4V~\cite{textocr-gpt4v}, OCRbench-kv~\cite{ocrbench-kv}, Uber-text~\cite{Uber-text} \\
\midrule
Science & AI2D~\cite{ai2d} \\
\bottomrule
\end{tabular}
\caption{Datasets used by SK-1M at the SFT stage.}
\label{tab:SK-1M}
\end{table*}

\begin{table*}[htp]
\centering
\begin{tabular}{p{0.3\linewidth} p{0.6\linewidth}}
\toprule
Task & Dataset \\
\midrule
Visual Instruction Tuning & SVIT~\cite{svit}, ALLaVA~\cite{allava}, ShareGPT4V~\cite{sharegpt4v}, cog-vlm-sft~\cite{cogvlm}\\
\midrule
Caption & TextCaps~\cite{textcaps}, ShareGPT-4o~\cite{sharegpt4o} \\
\midrule
VQA & CREPE~\cite{crepe},Imagenet multi task~\cite{imagenet}, VQA-rad~\cite{vqa-rad}, VQAv2~\cite{vqav2}, Vizwiz~\cite{vizwiz} \\
\midrule
Visual Reasoning & Wikitable~\cite{wikitable}, Super-CLEVR~\cite{superclevr}, VSR~\cite{vsr}, FigureQA~\cite{figureqa}, TallyQA~\cite{tallyqa}, Visual cot~\cite{visualcot}, CLEVR~\cite{clevr}, Raven~\cite{raven} \\
\midrule
Knowledge & ViQuAE~\cite{viquae},Kvqa~\cite{kvqa}, Websrc~\cite{websrc}, OK-VQA~\cite{okvqa}, Volcano~\cite{volcano}, RLAIF-V~\cite{rlaifv} \\
\midrule
Chart / Diagram / graph & ChartQA~\cite{chartqa},Iconqa~\cite{iconqa}, Infographicvqa~\cite{infographicvqa}, MapQA~\cite{mapqa}, TabFact~\cite{tabfact}, Chart2Text~\cite{charttext}, DVQA~\cite{dvqa}, Chartbench~\cite{chartbench}, MMC~\cite{mmc} \\
\midrule
Document & DeepForm~\cite{deepform}, TAT-QA~\cite{tat}, Visualmrc~\cite{visualmrc}, Docvqa~\cite{docvqa}, Sujet-Finance-QA-Vision-100k~\cite{Sujet-Finance-QA-Vision-100k}, Docmatix~\cite{docmatix}, DocReason25K~\cite{docreason}, DocStruct4M~\cite{DocStruct4M} \\
\midrule
Math & Mathverse~\cite{mathverse}, MathOCR~\cite{mathocr}, MathV360K~\cite{mathv360k}, GeoGPT4V~\cite{geogpt4v}, Geo170K QA~\cite{geo170k} \\
\midrule
OCR / Screen / Scene text & TQA~\cite{tqa}, HW-SQuAD~\cite{squad}, TextVQA~\cite{textvqa}, ST-VQA~\cite{stvqa},TextOCR-GPT4V~\cite{textocr-gpt4v}, OCRbench-kv~\cite{ocrbench-kv}, Uber-text~\cite{Uber-text}, OCR-VQA~\cite{ocrvqa}, ScreenQA~\cite{screenai}, SynthText~\cite{synthetic}, ChromeWriting~\cite{chromewriting}, K12 Printing~\cite{k12}, SQuAD~\cite{squad-v2}, ICDAR19-LSVT~\cite{icdar}, ICPR18-MTWI~\cite{icpr}, ICDAR19-ArT~\cite{icdar-art}, COCO-Text~\cite{coco-text}, Docscan~\cite{Docscan}, HierText~\cite{hiertext}\\
\midrule
Science & AI2D~\cite{ai2d}, Plotqa~\cite{plotqa}, ArXivQA~\cite{arxivvqa}, ScienceQA~\cite{scienceqa} \\
\bottomrule
\end{tabular}
\caption{Datasets used by SK-9M at the SFT stage.}
\label{tab:SK-9M}
\end{table*}

\end{document}